\DeclarePairedDelimiter\floor{\lfloor}{\rfloor}
\newcommand{\ba}{\mathbf{a}}
\newcommand{\bu}{\mathbf{u}}
\newcommand{\bP}{\mathbf{P}}
\newcommand{\bs}{\mathbf{s}}
\newcommand{\bx}{\mathbf{x}}
\newcommand{\by}{\mathbf{y}}
\newcommand{\mbbR}{\mathbb{R}}
\newcommand{\mbbI}{\mathbb{I}}
\DeclarePairedDelimiter{\norm}{\lVert}{\rVert}
\begin{document}
\title{Deterministic Consensus Maximization with Biconvex Programming} 

\titlerunning{Deterministic Consensus Maximization with Biconvex Programming}
%
\author{Zhipeng Cai\inst{1} \and
Tat-Jun Chin\inst{1} \and
Huu Le\inst{2} \and
David Suter\inst{3}}
%
\index{Chin, Tat-Jun}
\authorrunning{Z. Cai, T.-J. Chin, H. Le and D. Suter}
%

\institute{School of Computer Science, The University of Adelaide \\ \email{ \{zhipeng.cai,tat-jun.chin\}@adelaide.edu.au}\and
	School of Electrical Engineering and Computer Science, Queensland University of Technology\\
	\email{huu.le@qut.edu.au}\and
School of Computing and Security, Edith Cowan University\\
	\email{d.suter@ecu.edu.au}}
\maketitle              
\begin{abstract}
Consensus maximization is one of the most widely used robust fitting paradigms in computer vision, and the development of algorithms for consensus maximization is an active research topic. In this paper, we propose an efficient \emph{deterministic optimization} algorithm for consensus maximization. Given an initial solution, our method conducts a \emph{deterministic search} that forcibly increases the consensus of the initial solution. We show how each iteration of the update can be formulated as an instance of biconvex programming, which we solve efficiently using a novel biconvex optimization algorithm. In contrast to our algorithm, previous consensus improvement techniques rely on random sampling or relaxations of the objective function, which reduce their ability to significantly improve the initial consensus. In fact, on challenging instances, the previous techniques may even return a worse off solution. Comprehensive experiments show that our algorithm can consistently and greatly improve the quality of the initial solution, without substantial cost.\footnote{Matlab demo program is available at \url{https://github.com/ZhipengCai/Demo---Deterministic-consensus-maximization-with-biconvex-programming}.}

\keywords{Robust fitting \and Consensus maximization \and Biconvex programming}
\end{abstract}
\section{Introduction}

Due to the existence of noise and outliers in real-life data, robust model fitting is necessary to enable many computer vision applications. Arguably the most prevalent robust technique is random sample consensus (RANSAC)~\cite{fischler1981random}, which aims to find the model that has the largest consensus set. The RANSAC algorithm approximately solves this optimization problem, by repetitively sampling minimal subsets of the data, in the hope of ``hitting" an all-inlier minimal subset that gives rise to a model hypothesis with high consensus.

Many variants of RANSAC have been proposed~\cite{choi09}. Most variants attempt to conduct guided sampling using various heuristics, so as to speed up the retrieval of all-inlier minimal subsets. Fundamentally, however, taking minimal subsets reduces the span of the data and produces biased model estimates~\cite{meer04robust,tran2014sampling}. Thus, the best hypothesis found by RANSAC often has much lower consensus than the maximum achievable, especially on higher-dimensional problems. In reality, the RANSAC solution should only be taken as a rough initial estimate~\cite{chum2003locally}.

To ``polish" a rough RANSAC solution, one can perform least squares (LS) on the consensus set of the RANSAC estimate (i.e.~the Gold Standard Algorithm~\cite[Chap.~4]{hartley2003multiple}). Though justifiable from a maximum likelihood point of view, the efficacy of LS depends on having a sufficiently large consensus set to begin with.

A more useful approach is Locally Optimized RANSAC (LO-RANSAC)~\cite{chum2003locally,lebeda2012fixing}, which attempts to enlarge the consensus set of an initial RANSAC estimate, by generating hypotheses from \emph{larger-than-minimal subsets} of the consensus set.\footnote{This is typically invoked from within a main RANSAC routine.} The rationale is that hypotheses fitted on a larger number of inliers typically lead to better estimates with even higher support. Ultimately, however, LO-RANSAC is also a randomized algorithm. Although it conducts a more focused sampling, the algorithm cannot guarantee improvements to the initial estimate. As we will demonstrate in Sec.~\ref{sec:exp_geo}, often on more challenging datasets, LO-RANSAC is unable to significantly improve upon the RANSAC result.

Due to its combinatorial nature, consensus set maximization is NP-hard~\cite{chin2018robust}. While this has not deterred the development of globally optimal algorithms~\cite{olsson2008polynomial,zheng2011deterministically,li2009consensus,enqvist2015tractable,chin15,chin2016guaranteed,specialeconsensus,campbell2017iccv}, the fundamental intractability of the problem means that global algorithms are essentially variants of exhaustive search-and-prune procedures, whose runtime scales exponentially in the general case. While global algorithms have their place in computer vision, currently they are mostly confined to problems with low-dimensions and/or small number of measurements.

\subsection{Deterministic algorithms---a new class of methods}

Recently, efficient deterministic algorithms for consensus maximization are gaining attention~\cite{leexact,purkait2017emmcvpr}. Different from random sampling, such algorithms begin with an initial solution (obtained using least squares or a random sampling method) and iteratively performs \emph{deterministic updates} on the solution to improve its quality. While they do not strive for the global optimum, such algorithms are able to find excellent solutions due to the directed search.

To perform deterministic updating, the previous methods relax the objective function (Le et al.~\cite{leexact} use $\ell_1$ penalization, and Purkait et al.~\cite{purkait2017emmcvpr} use a smooth surrogate function). Invariably this necessitates the setting of a smoothing parameter that controls the degree of relaxation, and the progressive tightening of the relaxation to ensure convergence to a good solution. As we will demonstrate in Sec.~\ref{sec:worseOff}, incorrect settings of the smoothing parameter and/or its annealing rate may actually lead to a worse solution than the starting point.

\subsection{Our contributions}

We propose a novel deterministic optimization algorithm for consensus maximization. The overall structure of our method is a bisection search to increase the consensus of the current solution. The key to the effectiveness of our method is to formulate the feasibility test in each iteration as a \emph{biconvex program}, which we solve efficiently via a biconvex optimization algorithm. Unlike~\cite{leexact,purkait2017emmcvpr}, our method neither relaxes the objective function, nor requires tuning of smoothing parameters. On both synthetic and real datasets, we demonstrate the superior performance of our method over previous consensus improvement techniques.

\section{Problem definition}

Given a set of $N$ outlier contaminated measurements, consensus maximization aims to find the model $\bx\in D$ that is consistent with the largest data subset
\begin{align}
&\underset{\bx \in D}{\text{maximize}}\ \  \mathcal{I}(\bx),\label{eq:maxcon}
\end{align}
where $D$ is the domain of model parameters (more details later), and
\begin{align}
&\mathcal{I}(\bx) = \sum_{i=1}^N\mbbI\left(r_i(\bx) \le \epsilon\right)\label{eq:maxconConstraint}
\end{align}
counts the number of inliers (consensus) of $\bx$. Function $r_i(\bx)$ gives the \emph{residual} of the i-th measurement w.r.t.~$\bx$, $\epsilon$ is the inlier threshold and $\mbbI$ is the indicator function which returns 1 if the input statement is true and 0 otherwise.

Fig.~\ref{fig:updateStep} illustrates the objective function $\mathcal{I}(\bx)$. As can be appreciated from the inlier counting operations, $\mathcal{I}(\bx)$ is a step function with uninformative gradients.

\begin{figure*}[b]\centering
	\subfigure{\includegraphics[width=1\columnwidth]{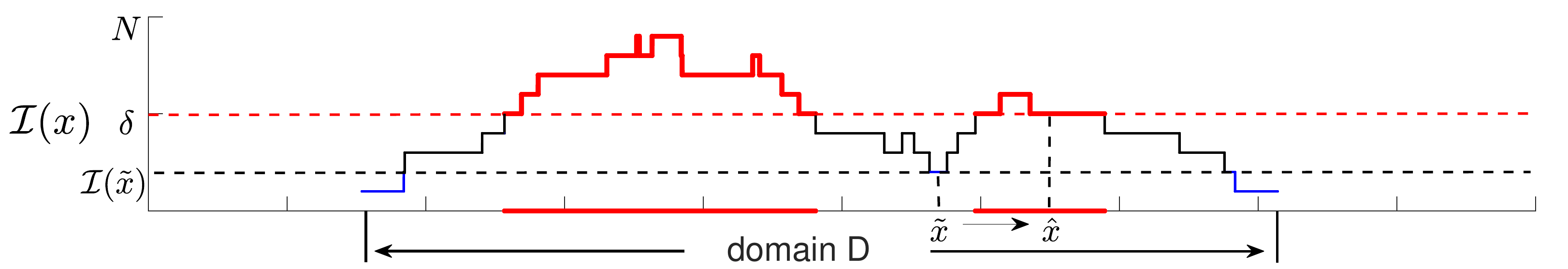}}
	\caption{Illustrating the update problem. Given the current solution $\tilde{\bx}$ and a target consensus $\delta$, where $\delta > \mathcal{I}(\tilde{\bx})$, the update problem~\eqref{eq:ref} aims to find another solution $\hat{\bx}$ with $\mathcal{I}(\hat{\bx}) \ge \delta$. Later in Sec.~\ref{sec:main}, problem~\eqref{eq:ref} will be embedded in a broader algorithm that searches over $\delta$ to realize deterministic consensus maximization.}
	\label{fig:updateStep}
\end{figure*}

\subsection{The update problem}

Let $\tilde{\bx}$ be an initial solution to~\eqref{eq:maxcon}; we wish to improve $\tilde{\bx}$ to yield a better solution. We define this task formally as
\begin{align}\label{eq:ref}
\begin{aligned}
&\text{find} & \bx\in D, && \text{such that} && \mathcal{I}(\bx)\geq \delta,
\end{aligned}
\end{align}
where $\delta > \mathcal{I}(\tilde{\bx})$ is a target consensus value. See Fig.{~\ref{fig:updateStep}} for an illustration. For now, assume that $\delta$ is given; later in Sec.~\ref{sec:main} we will embed~\eqref{eq:ref} in a broader algorithm to search over $\delta$.

Also, although~\eqref{eq:ref} does not demand that the revised solution be ``close" to $\tilde{\bx}$, it is strategic to employ $\tilde{\bx}$ as a starting point to perform the update. In Sec.~\ref{sec:continuousReformulation}, we will propose such an algorithm that is able to efficiently solve~\eqref{eq:ref}.

\subsection{Residual functions and solvable models}\label{sec:models}

Before embarking on a solution for~\eqref{eq:ref}, it is vital to first elaborate on the form of $r_i(\bx)$ and the type of models that can be fitted by the proposed algorithm. Following previous works~\cite{kahl2008multiple,ke2007quasiconvex,chin2016guaranteed}, we focus on residual functions of the form
\begin{align}\label{eq:generalReprojError}
r_i(\bx) = \frac{q_i(\bx)}{p_i(\bx)},
\end{align}
where $q_i(\bx)$ is convex quadratic and $p_i(\bx)$ is linear. We also insist that $p_i(\bx)$ positive. We call $r_i(\bx)$ the \emph{quasiconvex geometric residual} since it is quasiconvex~\cite[Sec.~3.4.1]{boyd2004convex} in the domain
\begin{align}\label{eq:domain}
D = \{ \bx \in \mathbb{R}^d \mid p_i(\bx) > 0, i = 1,\dots,N \},
\end{align}
Note that $D$ in the above form specifies a convex domain in $\mathbb{R}^d$.

Many model fitting problems in computer vision have residuals of the type~\eqref{eq:generalReprojError}. For example, in multiple view triangulation where we aim to estimate the 3D point $\bx \in \mathbb{R}^3$ from multiple (possibly incorrect) 2D observations $\{ \bu_i \}^{N}_{i=1}$,
\begin{align}
r_i(\bx) = \frac{\| (\bP^{(1:2)}_{i} - \bu_i \bP^{(3)}_i) \bar{\bx}  \|_2}{\bP^{(3)}_i\bar{\bx}}
\end{align}
is the reprojection error in the $i$-th camera, where $\bar{\bx} = [\bx^T~1]^T$,
\begin{align}
\bP_i = \left[ \begin{matrix} \bP_i^{(1:2)} \\ \bP_i^{(3)} \end{matrix} \right] \in \mathbb{R}^{3\times 4}
\end{align}
is the $i$-th camera matrix with $\bP_i^{(1:2)}$ and $\bP_i^{(3)}$ respectively being the first-two rows and third row of $\bP$. Insisting that $\bx$ lies in the convex domain $D = \{ \bx \in \mathbb{R}^3 \mid \bP^{(3)}_i\bar{\bx} > 0, \forall i \}$ ensures that the estimated $\bx$ lies in front of all the cameras. 

Other model fitting problems with quasiconvex geometric residuals include homography fitting, camera resectioning, and the known rotation problem; see~\cite{kahl2008multiple} for details and other examples. However, note that fundamental matrix estimation is not a quasiconvex problem~\cite{kahl2008multiple}; in Sec.~\ref{sec:exp}, we will show how the proposed technique can be adapted to robustly estimate the fundamental matrix.

\section{Solving the update problem}\label{sec:continuousReformulation}

As the decision version of~\eqref{eq:maxcon}, the update problem~\eqref{eq:ref} is NP-complete~\cite{chin2018robust} and thus can only be approximately solved. In this section, we propose an algorithm that works well in practice, i.e., able to significantly improve $\tilde{\bx}$.

\subsection{Reformulation as continuous optimization}

With quasicovex geometric residuals~\eqref{eq:generalReprojError}, the inequality $r_i(\bx) \le \epsilon$ becomes
\begin{align}\label{eq:thresholding}
q_i(\bx)-\epsilon p_i(\bx) \le 0.
\end{align}
Since $q_i(\bx)$ is convex and $p_i(\bx)$ is linear, the constraint~\eqref{eq:thresholding} specifies a convex region in $D$. Defining
\begin{align}\label{eq:r_i'}
r_i^\prime(\bx) := q_i(\bx)-\epsilon p_i(\bx)
\end{align}
and introducing for each $r_i'(\bx)$ an indicator variable $y_i\in [0,1]$ and a slack variable $s_i \ge 0$, we can write~\eqref{eq:ref} using complementarity constraints~\cite{hu2012linear} as
\begin{subequations}\label{eq:ref_comlementarity}
\begin{align}
&\text{find}
& & \bx \in D\label{eq:ref_comlementarity(a)}\\
&\text{subject to}
& & \sum_i  y_i \geq \delta,	\label{eq:ref_comlementarity(b)}\\
&
& & y_i \in [0, 1], \;\; \forall i,  \label{eq:ref_comlementarity(c)}\\
&
& & y_i s_i = 0, \;\; \forall i, \label{eq:ref_comlementarity(d)}\\
&
& & s_i - r_i'(\bx) \geq 0, \;\; \forall i, \label{eq:ref_comlementarity(e)}\\
&
& & s_i \geq 0, \;\; \forall i.\label{eq:ref_comlementarity(f)}
\end{align}
\end{subequations} 
Intuitively, $y_i$ reflects whether the i-th datum is an inlier w.r.t.~$\bx$.  In the following, we establish the integrality of $y_i$ and the equivalence between~\eqref{eq:ref_comlementarity} and~\eqref{eq:ref}.

\begin{lemma}
Problems~\eqref{eq:ref_comlementarity} and~\eqref{eq:ref} are equivalent.
\end{lemma}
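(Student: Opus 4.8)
The plan is to prove the equivalence by showing that the two problems share the same set of feasible model parameters: a point $\bx \in D$ satisfies $\mathcal{I}(\bx) \geq \delta$ if and only if it can be extended by auxiliary variables $\{y_i\}$ and $\{s_i\}$ into a feasible tuple of~\eqref{eq:ref_comlementarity}. I would argue the two implications separately.

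For the ``only if'' direction, given $\bx$ with $\mathcal{I}(\bx) \geq \delta$, I would construct the auxiliary variables explicitly from the inlier status. For every inlier index (those $i$ with $r_i'(\bx) \leq 0$) set $y_i = 1$ and $s_i = 0$; for every outlier index ($r_i'(\bx) > 0$) set $y_i = 0$ and $s_i = r_i'(\bx)$. A direct substitution verifies~\eqref{eq:ref_comlementarity(c)}--\eqref{eq:ref_comlementarity(f)}: the complementarity~\eqref{eq:ref_comlementarity(d)} holds because in each case at least one factor vanishes, and~\eqref{eq:ref_comlementarity(e)}--\eqref{eq:ref_comlementarity(f)} hold by the sign of $r_i'(\bx)$. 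Since $\sum_i y_i$ then equals the inlier count $\mathcal{I}(\bx) \geq \delta$, constraint~\eqref{eq:ref_comlementarity(b)} is satisfied as well.

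For the ``if'' direction, the crux is that the complementarity constraint reads off the inlier structure: whenever $y_i > 0$, constraint~\eqref{eq:ref_comlementarity(d)} forces $s_i = 0$, and then~\eqref{eq:ref_comlementarity(e)} gives $r_i'(\bx) \leq 0$, so datum $i$ must be an inlier. Hence only inlier indices can carry positive weight, and using $y_i \leq 1$ from~\eqref{eq:ref_comlementarity(c)} we obtain $\delta \leq \sum_i y_i = \sum_{i:\, r_i'(\bx) \leq 0} y_i \leq \mathcal{I}(\bx)$, so $\bx$ is feasible for~\eqref{eq:ref}. The same chain of implications also yields the integrality claim advertised before the lemma: starting from any feasible tuple, raising each $y_i$ on an inlier index to $1$ (it is already pinned to $0$ on every outlier) preserves feasibility, so one may assume $y_i \in \{0,1\}$ without loss of generality.

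The argument is elementary, and I do not anticipate a genuine obstacle. The only point requiring care is keeping the direction of the slack inequality~\eqref{eq:ref_comlementarity(e)} aligned with the complementarity~\eqref{eq:ref_comlementarity(d)}, since it is their conjunction that reproduces the step behaviour of $\mbbI(r_i'(\bx) \leq 0)$; I would also check that the boundary case $r_i'(\bx) = 0$ is counted as an inlier, consistent with the non-strict threshold $r_i(\bx) \leq \epsilon$ in~\eqref{eq:maxconConstraint}.
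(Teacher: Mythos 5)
Your proof is correct and follows essentially the same route as the paper's: both directions hinge on the observation that the conjunction of the complementarity constraint~\eqref{eq:ref_comlementarity(d)} with~\eqref{eq:ref_comlementarity(e)} forces $y_i = 0$ on every outlier (the paper states this as its condition \textbf{a1}, you state the contrapositive), and feasibility of an $\bx$ with $\mathcal{I}(\bx)\geq\delta$ is shown by the same explicit assignment $y_i=1$, $s_i=0$ on inliers. Your write-up is if anything slightly more explicit than the paper's in the chain $\delta \leq \sum_i y_i \leq \mathcal{I}(\bx)$; no gaps.
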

\begin{proof}
Observe that for any $\bx$,
\begin{itemize}
\item[\textbf{a1:}] If $r_i'(\bx) > 0$, the $i$-th datum is outlying to $\bx$, and~\eqref{eq:ref_comlementarity(d)} and~\eqref{eq:ref_comlementarity(e)} will force $s_i \geq r_i'(\bx)>0$ and $y_i = 0$.
\item[\textbf{a2:}] If $r_i'(\bx) \leq 0$, the $i$-th datum is inlying to $\bx$, and~\eqref{eq:ref_comlementarity(f)} and~\eqref{eq:ref_comlementarity(d)} allow $s_i$ and $y_i$ to have only one of the following settings: \textbf{a2.1:} $s_i > 0$ and $y_i = 0$; or \textbf{a2.2:} $s_i = 0$ and $y_i$ being indeterminate.
\end{itemize}
If $\bx$ is infeasible for~\eqref{eq:ref}, i.e., $\mathcal{I}(\bx) < \delta$, condition \textbf{a1} ensures that~\eqref{eq:ref_comlementarity(b)} is violated, hence $\bx$ is also infeasible for~\eqref{eq:ref_comlementarity}. Conversely, if $\bx$ is infeasible for~\eqref{eq:ref_comlementarity}, i.e., $\sum_i{y_i} < \delta$, then $\mathcal{I}(\bx) < \delta$, hence $\bx$ is also infeasible for~\eqref{eq:ref}.

If $\bx$ is feasible for~\eqref{eq:ref}, we can always set $y_i = 1$ and $s_i = 0$ for all inliers to satisfy~\eqref{eq:ref_comlementarity(b)}, ensuring the feasibility of $\bx$ to~\eqref{eq:ref_comlementarity}. Conversely, if  $\bx$ is feasible for~\eqref{eq:ref_comlementarity}, by \textbf{a1} there are at least $\delta$ inliers, thus $\bx$ is also feasible to~\eqref{eq:ref}.\qed
\end{proof}

From the computational standpoint,~\eqref{eq:ref_comlementarity} is no easier to solve than~\eqref{eq:ref}. However, by constructing a cost function from the bilinear constraints~\eqref{eq:ref_comlementarity(d)}, we arrive at the following continuous optimization problem
\begin{subequations}\label{eq:ref_continuous}
\begin{align}
&\underset{\bx \in D,~\bs \in \mbbR^{N},~\by \in \mbbR^{N}}{\text{minimize}}
& & \sum_{i}{y_i s_i} \label{eq:ref_continuous(a)}\\
&\text{subject to}
& & \sum_i y_i \geq \delta,	\label{eq:ref_continuous(b)}\\
&
& & y_i \in [0, 1], \;\; \forall i,  \label{eq:ref_continuous(c)}\\
&
& & s_i - r_i'(\bx) \geq 0, \;\; \forall i, \label{eq:ref_continuous(d)}\\
&
& & s_i \geq 0, \;\; \forall i, \label{eq:ref_continuous(e)}
\end{align}
\end{subequations} 
where $\bs = \left[ s_1, \dots, s_N \right]^T$ and $\by = \left[ y_1, \dots, y_N \right]^T$. The following lemma establishes the equivalence between~\eqref{eq:ref_continuous} and~\eqref{eq:ref}.
\begin{lemma}\label{lem:equivalence2}
If the globally optimal value of~\eqref{eq:ref_continuous} is zero, then there exists $\bx$ that satisfies the update problem~\eqref{eq:ref}.
\end{lemma}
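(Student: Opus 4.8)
The plan is to reduce the claim to Lemma~1 by exhibiting, from a global minimizer of~\eqref{eq:ref_continuous} whose objective value is zero, a point that is feasible for the complementarity formulation~\eqref{eq:ref_comlementarity}. Let $(\bx^\star,\bs^\star,\by^\star)$ denote such a global minimizer, so that $\sum_i y_i^\star s_i^\star = 0$. First I would observe that the feasible set of~\eqref{eq:ref_continuous} forces $y_i^\star \ge 0$ via~\eqref{eq:ref_continuous(c)} and $s_i^\star \ge 0$ via~\eqref{eq:ref_continuous(e)}, so every summand $y_i^\star s_i^\star$ is nonnegative.

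The key step is then the elementary but decisive observation that a sum of nonnegative terms can equal zero only if each term vanishes individually; hence $y_i^\star s_i^\star = 0$ for every $i$. This is precisely the bilinear complementarity constraint~\eqref{eq:ref_comlementarity(d)} that was dropped when passing from~\eqref{eq:ref_comlementarity} to the continuous reformulation~\eqref{eq:ref_continuous}, now recovered for free at a zero-valued optimum.

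With~\eqref{eq:ref_comlementarity(d)} in hand, I would verify that the remaining constraints of~\eqref{eq:ref_comlementarity} hold automatically, since they coincide verbatim with those of~\eqref{eq:ref_continuous}: the membership $\bx^\star \in D$, the cardinality bound $\sum_i y_i^\star \ge \delta$, the box constraint $y_i^\star \in [0,1]$, the residual bound $s_i^\star \ge r_i'(\bx^\star)$, and nonnegativity $s_i^\star \ge 0$ match~\eqref{eq:ref_comlementarity(a)},~\eqref{eq:ref_comlementarity(b)},~\eqref{eq:ref_comlementarity(c)},~\eqref{eq:ref_comlementarity(e)} and~\eqref{eq:ref_comlementarity(f)} respectively. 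Thus $(\bx^\star,\bs^\star,\by^\star)$ is feasible for~\eqref{eq:ref_comlementarity}, and invoking the equivalence established in Lemma~1 yields that $\bx^\star$ is feasible for~\eqref{eq:ref}, i.e.\ $\mathcal{I}(\bx^\star) \ge \delta$, as required.

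The core argument is short, so the only point I would take care to pin down is attainment: the statement refers to the optimal \emph{value} being zero, whereas the derivation needs an actual minimizer achieving it. I therefore expect the main (and really the sole) subtlety to be making explicit that ``optimal value zero'' is read as the existence of a global optimum attaining zero, which is consistent since the objective is bounded below by $0$ on the feasible set and zero is exactly the best attainable value. Once a zero-attaining point is secured, the termwise vanishing argument goes through without further difficulty.
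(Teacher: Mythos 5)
Your proof is correct and follows essentially the same route as the paper's: take a global minimizer with zero objective, observe that nonnegativity of each $y_i s_i$ forces the complementarity constraints~\eqref{eq:ref_comlementarity(d)} to hold, conclude feasibility for~\eqref{eq:ref_comlementarity} and hence for~\eqref{eq:ref} via the first lemma. The paper states this more tersely (it does not spell out the termwise-vanishing step or the attainment caveat), but the argument is the same.
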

\begin{proof}
Due to~\eqref{eq:ref_continuous(c)} and~\eqref{eq:ref_continuous(e)}, the objective value of~\eqref{eq:ref_continuous} is lower bounded by zero. Let $(\bx^\ast, \bs^\ast, \by^\ast)$ be a global minimizer of~\eqref{eq:ref_continuous}. If $\sum_i y_i^\ast s_i^\ast = 0$, then $\bx^\ast$ satisfies all the constraints in~\eqref{eq:ref_comlementarity}, thus $\bx^\ast$ is feasible to~\eqref{eq:ref}.\qed
\end{proof}

\subsection{Biconvex optimization algorithm}\label{sec:BCO}

Although all the constraints in~\eqref{eq:ref_continuous} are convex (including $\bx \in D$), the objective function is not convex. Nonetheless, the primary value of formulation~\eqref{eq:ref_continuous} is to enable the usage of convex solvers to approximately solve the update problem. Note also that~\eqref{eq:ref_continuous} does not require any smoothing parameters. 

To this end, observe that~\eqref{eq:ref_continuous} is in fact an instance of \emph{biconvex programming}~\cite{wiki:Biconvex}. If we fix $\bx$ and $\bs$,~\eqref{eq:ref_continuous} reduces to the linear program (LP)
\begin{subequations}\label{eq:solveY}
\begin{align}
&\underset{\by\in\mbbR^N}{\text{minimize}} 
& &\sum_i{y_i s_i}\\
&\text{subject to} 
& &\sum_i{y_i}\geq \delta,\label{eq:solveY(b)}\\
& 
& &y_i\in[0,1],\ \  \forall i,
\end{align}
\end{subequations}
which can be solved in close form.\footnote{Set $y_i = 1$ if $s_i$ is one of the $\delta$-smallest slacks, and $y_i = 0$ otherwise.} On the other hand, if we fix $\by$,~\eqref{eq:ref_continuous} reduces to the second order cone program (SOCP)
\begin{subequations}\label{eq:solveSX}
\begin{align}
&\underset{\bx\in D, \bs\in\mbbR^{N}}{\text{minimize}} 
& &\sum_{i}{ y_i s_i}\label{eq:solveSX(a)}\\
&\text{subject to}
& & s_i - r_i'(\bx) \geq 0, \;\; \forall i,\label{eq:solveSX(b)}\\
&
& & s_i \geq 0, \;\; \forall i.\label{eq:solveSX(e)}
\end{align}
\end{subequations}
Note that $s_i$ does not have influence if the corresponding $y_i = 0$; these slack variables can be removed from the problem to speed up optimization.\footnote{Given the optimal $\hat{\bx}$ for~\eqref{eq:solveSX}, the values of the slack variables that did not participate in the problem can be obtained as $s_i = \max\{0,r_i'(\hat{\bx})\}$.}

The proposed algorithm (called \emph{Biconvex Optimization} or \emph{BCO}) is simple: we initialize $\bx$ as the starting $\tilde{\bx}$ from~\eqref{eq:ref}, and set the slacks as
\begin{align}\label{eq:initS}
&s_i = \max{\{0,r_i'(\tilde{\bx})\}},\;\; \forall i.
\end{align}
Then, we alternate between solving the LP and SOCP until convergence. Since~\eqref{eq:ref_continuous} is lower-bounded by zero, and each invocation of the LP and SOCP are guaranteed to reduce the cost, BCO will always converge to a \emph{local optimum} $(\hat{\bx}, \hat{\bs}, \hat{\by})$.

\begin{algorithm}[t]\centering
\caption{Biconvex optimization (BCO) for the continuous problem~\eqref{eq:ref_continuous}.}
\label{alg:BCO}                         
\begin{algorithmic}[1]      
\REQUIRE Initial solution $\tilde{\bx}$, target consensus $\delta$.
\STATE Initialize $\hat{\bx} \leftarrow \tilde{\bx}$, set $\hat{\bs}$ using~\eqref{eq:initS}.
\WHILE {not converged}
\STATE $\hat{\by} \leftarrow$ solve LP~\eqref{eq:solveY}.
\STATE $(\hat{\bx},\hat{\bs})\leftarrow$ solve SOCP~\eqref{eq:solveSX}.
\ENDWHILE
\RETURN $\hat{\bx}$, $\hat{\bs}$ and $\hat{\by}$.
\end{algorithmic}
\end{algorithm}

In respect to solving the update problem~\eqref{eq:ref}, if the local optimum $(\hat{\bx}, \hat{\bs}, \hat{\by})$ turns out to be the global optimum (i.e., $\sum_i \hat{y}_i \hat{s}_i = 0$), then $\hat{\bx}$ is a solution to~\eqref{eq:ref}, i.e., $\mathcal{I}(\hat{\bx}) \ge \delta$. Else, $\hat{\bx}$ might still represent an improved solution over $\tilde{\bx}$. Compared to randomized search, our method is by design more capable of improving $\tilde{\bx}$. This is because optimizing~\eqref{eq:ref_continuous} naturally reduces the residual of outliers that ``should be'' an inlier, i.e., with $y_i = 1$, which may still lead to a local refinement, i.e., $\mathcal{I}(\hat{\bx})>\delta_l = \mathcal{I}(\tilde{\bx})$, regardless of whether problem~\eqref{eq:ref} is feasible or not. In the next section, we will construct an effective deterministic consensus maximization technique based on Algorithm~\ref{alg:BCO}.

\section{Main algorithm---deterministic consensus maximization}\label{sec:main}

Given an initial solution $\bx^{(0)}$ to~\eqref{eq:maxcon}, e.g., obtained using least squares or a random sampling heuristic, we wish to update $\bx^{(0)}$ to a better solution.  The main structure of our proposed algorithm is simple: we conduct bisection over the consensus value to search for a better solution; see Algorithm~\ref{alg:Bisec}.

\begin{algorithm}[t]\centering
\caption{Bisection (non-global) for deterministic consensus maximization.}
\label{alg:Bisec}                         
\begin{algorithmic}[1]                   
\REQUIRE Initial solution $\bx^{(0)}$ for~\eqref{eq:maxcon} obtained using least squares or random sampling.
\STATE $\tilde{\bx} \leftarrow \bx^{(0)}$, $\delta_{h} \leftarrow N$, $\delta_{l} \leftarrow \mathcal{I}(\bx^{(0)})$.
\WHILE {$\delta_{h}>\delta_{l}+1$}
\STATE $\delta \leftarrow \floor{0.5(\delta_l + \delta_h)}$.
\STATE $(\hat{\bx}, \hat{\bs}, \hat{\by}) \leftarrow$ BCO($\tilde{\bx}$, $\delta$) (see Algorithm~\ref{alg:BCO}).\label{step:bco}
\IF{$\mathcal{I}(\hat{\bx}) > \mathcal{I}(\tilde{\bx})$}
\STATE $\tilde{\bx} \leftarrow \hat{\bx}$, $\delta_l \leftarrow \mathcal{I}(\hat{\bx})$.
\ENDIF
\IF{$\mathcal{I}(\hat{\bx})<\delta$}
\STATE $\delta_h \leftarrow \delta$.
\ENDIF
\ENDWHILE
\RETURN $\tilde{\bx}$, $\delta_l$.
\end{algorithmic}
\end{algorithm}

A lower and upper bound $\delta_l$ and $\delta_h$ for the consensus, which are initialized respectively to $\mathcal{I}(\bx^{(0)})$ and $N$, are maintained and progressively tightened. Let $\tilde{\bx}$ be the current best solution (initialized to $\bx^{(0)}$); then, the midpoint $\delta = \lfloor 0.5(\delta_l + \delta_h) \rfloor$ is obtained and the update problem via the continuous biconvex formulation~\eqref{eq:ref_continuous} is solved using Algorithm~\ref{alg:BCO}. If the solution $\hat{\bx}$ for~\eqref{eq:ref_continuous} has a higher quality than the incumbent, $\tilde{\bx}$ is revised to become $\hat{\bx}$ and $\delta_l$ is increased to $\mathcal{I}(\hat{\bx})$. And if $\mathcal{I}(\hat{\bx})<\delta$, $\delta_h$ is decreased to $\delta$. Algorithm~\ref{alg:Bisec} ends when $\delta_h = \delta_l+1$.

Since the ``feasibility test" in Algorithm~\ref{alg:Bisec} (Step~\ref{step:bco}) is solved via a non-convex subroutine, the bisection technique does not guarantee finding the global solution, i.e., the quality of the final solution may underestimate the maximum achievable quality. However, our technique is fundamentally advantageous compared to previous methods~\cite{chum2003locally,leexact,purkait2017emmcvpr} since it is not subject to the vagaries of randomization or require tuning of hyperparameters. Empirical results in the next section will demonstrate the effectiveness of the proposed algorithm.

\section{Results}\label{sec:exp}

We call the proposed algorithm \emph{IBCO} (for \emph{iterative biconvex optimization}). We compared IBCO against the following random sampling methods:
\begin{itemize}[leftmargin=1em]
  \item RANSAC (RS)~\cite{fischler1981random} (baseline): the confidence $\rho$ was set to $0.99$ for computing the termination threshold.
  \item PROSAC (PS)~\cite{chum2005matching} and Guided MLESAC (GMS)~\cite{tordoff2005guided} (RS variants with guided sampling): only tested for fundamental matrix and homography estimation since inlier priors like matching scores for correspondences were needed.
  \item LO-RANSAC (LRS)~\cite{chum2003locally}: subset size in inner sampling was set to half of the current consensus size, and the max number of inner iterations was set to $10$.
  \item Fixing LO-RANSAC (FLRS)~\cite{lebeda2012fixing}: subset size in inner sampling was set to $7 \times$ minimal subset size, and the max number of inner iterations was set to $50$.
  \item USAC~\cite{raguram13}: a modern technique that combines ideas from PS and LRS.\footnote{Code from \url{htts://http://www.cs.unc.edu/~rraguram/usac/USAC-1.0.zip}.} USAC was evaluated only on fundamental matrix and homography estimation since the available code only implements these models.
\end{itemize} 
Except USAC which was implemented in C++, the other sampling methods were based on MATLAB~\cite{KovesiMATLABCode}. Also, least squares was executed on the final consensus set to refine the results of all the random sampling methods.

In addition to the random sampling methods, we also compared IBCO against the following deterministic consensus maximization algorithms:
\begin{itemize}[leftmargin=1em]
 \item Exact Penalty (EP) method~\cite{leexact}: The method\footnote{Code from \url{https://cs.adelaide.edu.au/~huu/}.} was retuned for best performance on our data: we set the penalty parameter $\alpha$ to 1.5 for fundamental matrix estimation and 0.5 for all other problems. The annealing rate $\kappa$ for the penalty parameter was set to 5 for linear regression and 2D homography estimation and 1.5 for triangulation and fundamental matrix estimation.
  \item Smooth Surrogate (SS) method~\cite{purkait2017emmcvpr}: Using our own implementation. The smoothing parameter $\gamma$ was set to $0.01$ as suggested in~\cite{purkait2017emmcvpr}.
\end{itemize}
For the deterministic methods, Table~\ref{tab:solver4optimization} lists the convex solvers used for their respective subproblems. Further, results for these methods with both FLRS and random initialization ($\bx^{(0)}$ was generated randomly) were provided, in order to show separately the performance with good (FLRS) and bad (random) initialization. We also tested least squares initialization, but under high outlier rates, its effectiveness was no better than random initialization. All experiments were executed on a laptop with Intel Core 2.60GHz i7 CPU and 16GB RAM. 

\begin{table}[t]
\begin{center}
\begin{tabular}{ |c|c|c|c|c|c| }
\hline
Convex subproblem & LP & SOCP \\ 
 \hline
 Solvers used & Gurobi & Sedumi \\  
 \hline
 Methods using the solver & EP, SS & IBCO \\ 
 \hline 
\end{tabular}
\end{center}
\caption{Convex solvers used in deterministic methods.}\label{tab:solver4optimization}
\end{table}

\subsection{Robust linear regression on synthetic data}\label{sec:exp_lin}
Data of size $N = 1000$ for $8$-dimensional linear regression (i.e., $\bx \in \mathbb{R}^8$) were synthetically generated. In linear regression, the residual takes the form
\begin{align}
r_i(\bx) = \norm{\ba_i^T\bx-b_i}_2,
\end{align}
which is a special case of~\eqref{eq:generalReprojError} (set $p_i(\bx) = 1$), and each datum is represented by \{$\ba_i\in\mbbR^8$, $b_i \in \mbbR$\}. First, the independent measurements $\{ \ba_i \}^{N}_{i=1}$ and parameter vector $\bx$ were randomly sampled. The dependent measurements were computed as $b_i = \ba_i^T \bx$ and added with noise uniformly distributed between $[-0.3,0.3]$. A subset of $\eta\%$ of the dependent measurements were then randomly selected and added with Gaussian noise of $\sigma = 1.5$ to create outliers. To guarantee the outlier rate, each outlier is regenerated until the noise is not within [-0.3,0.3]. The inlier threshold $\epsilon$ for~\eqref{eq:maxcon} was set to $0.3$. 

Fig.~\ref{fig:LinResult} shows the optimized consensus, runtime and model accuracy of the methods for $\eta \in \{0, 5,...,70, 75\}$, averaged over $10$ runs for each data instance. Note that the actual outlier rate was sometimes slightly lower than expected since the largest consensus set included some outliers with low noise value. For $\eta = 75$ the actual outlier rate was around 72\% (see Fig.~\ref{subfig:LinCon}). To prevent inaccurate analysis caused by this phenomenon, results for $\eta>75$ were not provided. 

\begin{figure}[t]\centering
	\subfigure[Average optimized consensus.]{\includegraphics[width=0.49\columnwidth]{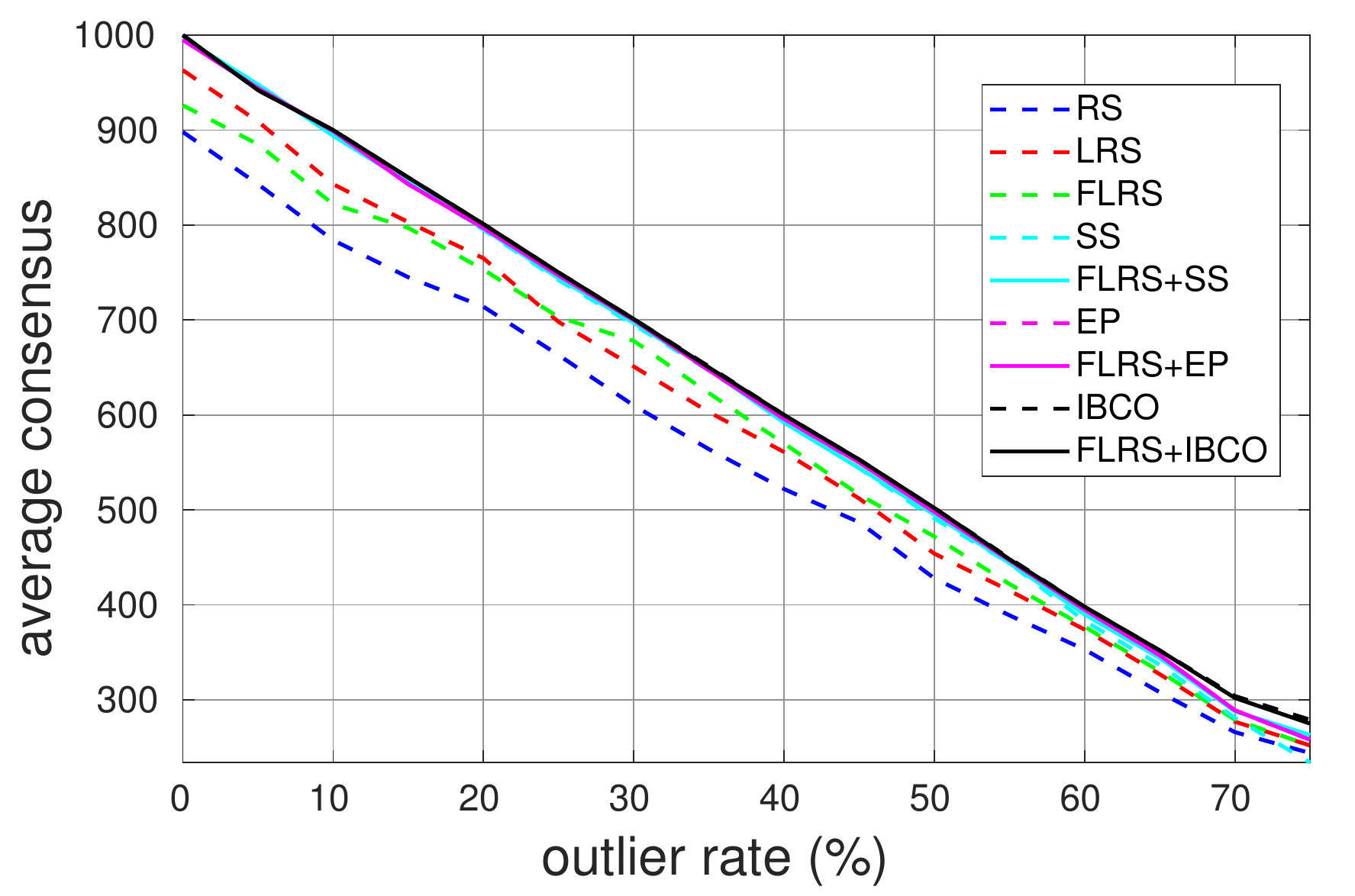}\label{subfig:LinCon}}
	\subfigure[Relative difference of consensus to RS.]{\includegraphics[width=0.49\columnwidth]{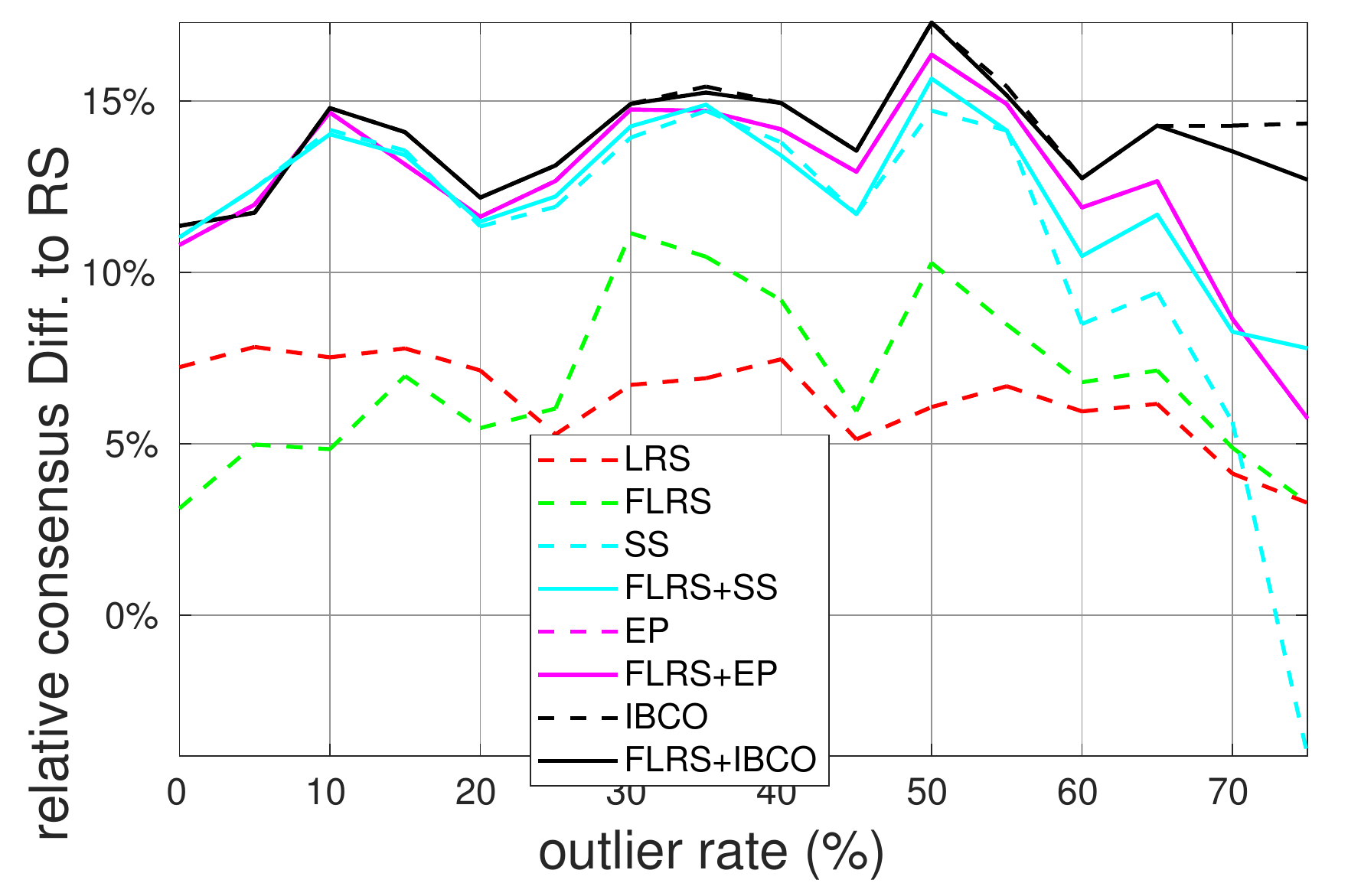}\label{subfig:LinRef}}
	\subfigure[Average runtime (seconds, in log scale).]{\includegraphics[width=0.49\columnwidth]{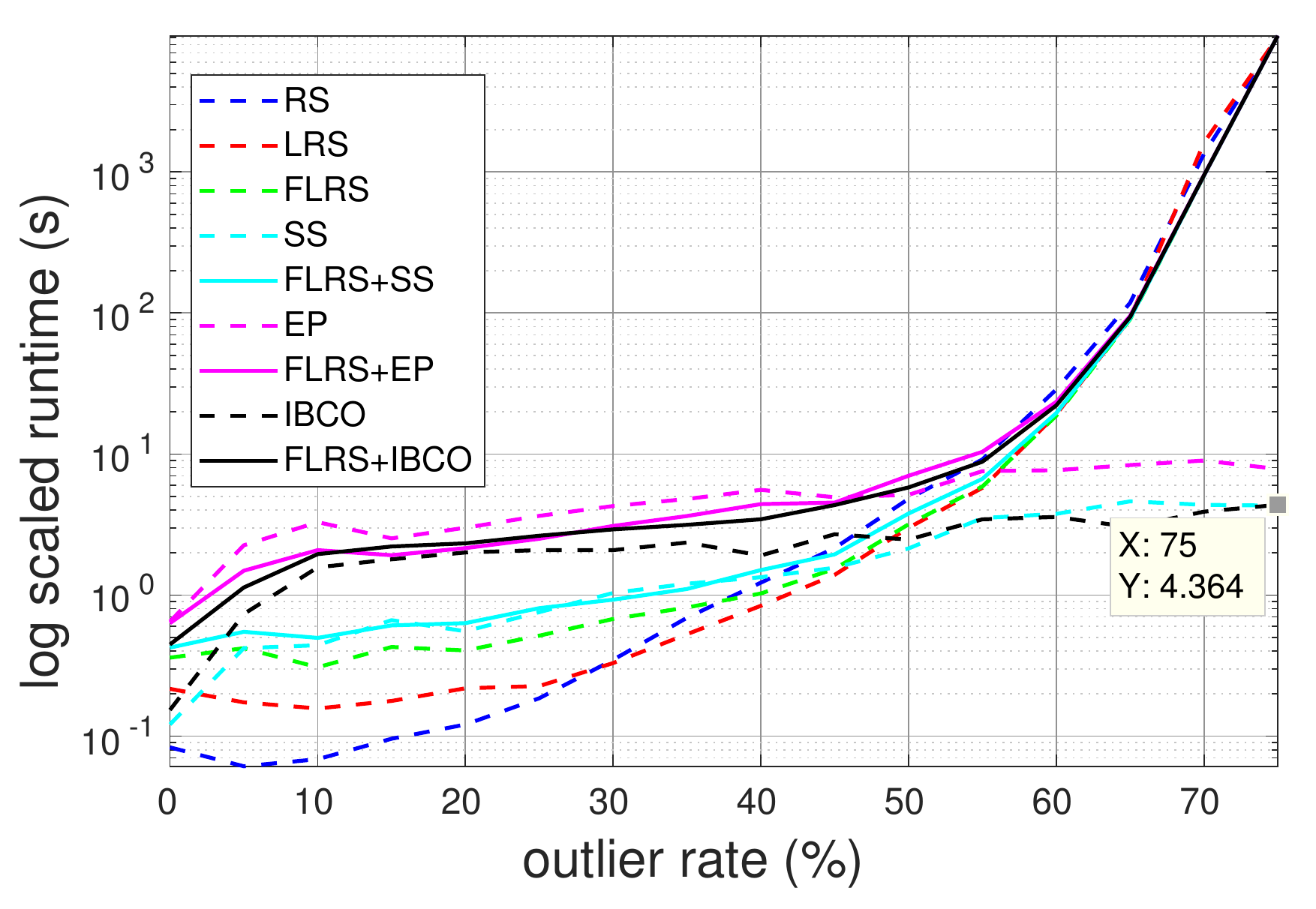}\label{subfig:LinTime}}
	\subfigure[Average residuals on ground truth inliers for models fitted on the consensus set by least squares.]{\includegraphics[width=0.49\columnwidth]{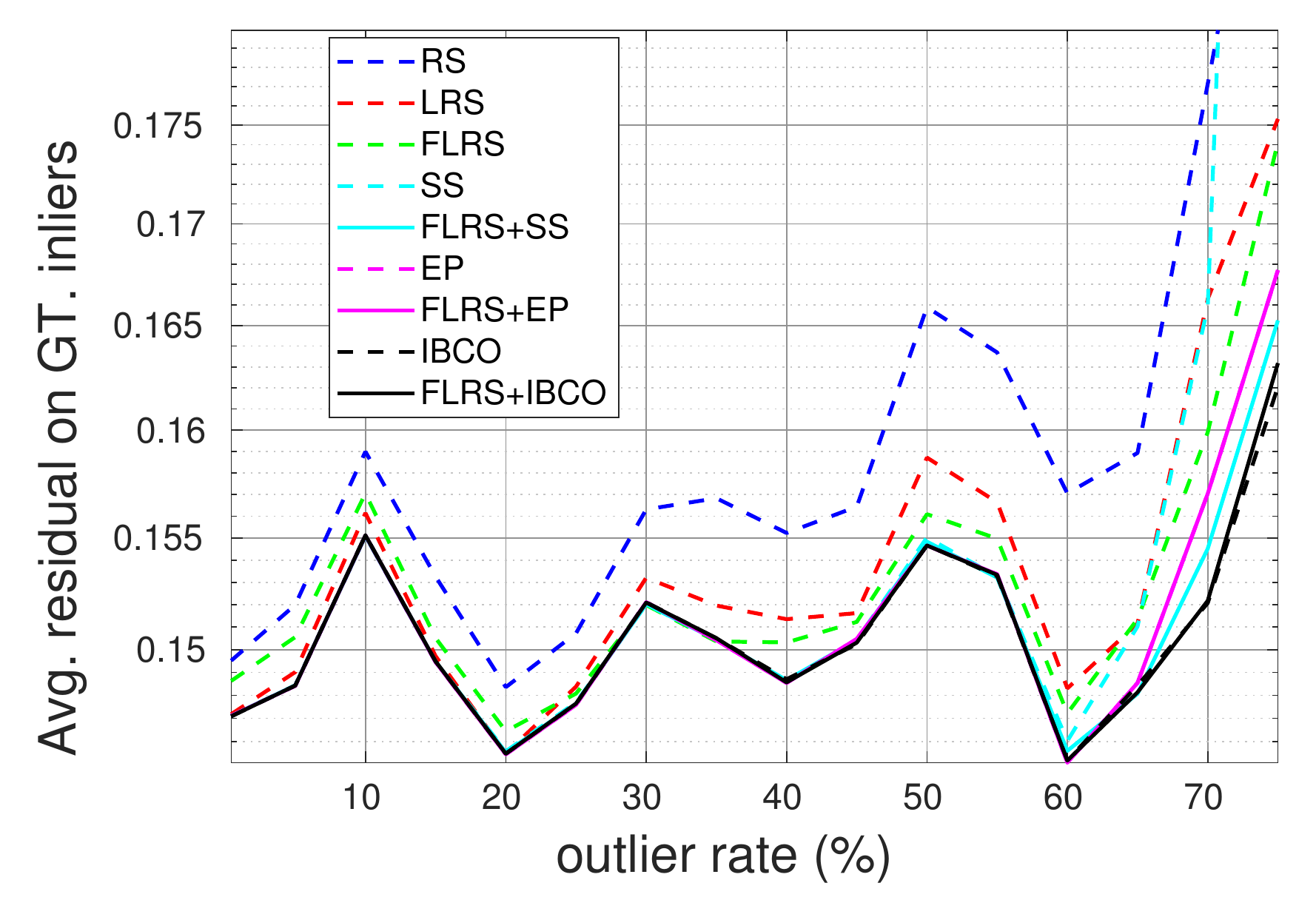}\label{subfig:LinResGT}}
	\caption{Robust linear regression results with varied $\eta$ (approx. outlier rate).}
	\label{fig:LinResult}
\end{figure}

Fig.~\ref{subfig:LinRef} demonstrates for each method the relative consensus difference to RS. It is evident that both IBCO variants outperformed other methods in general. Unlike other methods, whose improvement to RS was low at high outlier rates, both IBCO variants were consistently better than RS by more than 11\%. Though IBCO was only marginally better than EP for outlier rates lower than 65\%, Fig.~\ref{subfig:LinCon} shows that for most of the data instances, both IBCO variants found consensus very close or exactly equal to the maximum achievable. The cost of IBCO was fairly practical (less than 5 seconds for all data instances, see the data tip in Fig.~\ref{subfig:LinTime}). Also the runtime of the random sampling methods (RS, LRS, FLRS) increased exponentially with $\eta$. Hence, at high $\eta$, the major cost of FLRS+EP, FLRS+SS and FLRS+IBCO came from FLRS. 

To demonstrate the significance of having higher consensus, we further performed least squares fitting on the consensus set of each method. Given a least squares fitted model $\bx_{LS}$, define the average residual on ground truth inliers (the data assigned with less than 0.3 noise level) as:
\begin{align}
& e(\bx_{LS}) = \frac{\sum_{i^\ast\in \mathcal{I}^\ast}{r_{i^\ast}(\bx_{LS})}}{|\mathcal{I^\ast}|},
\end{align}
where $\mathcal{I}^\ast$ was the set of all ground truth inliers.
Fig.~\ref{subfig:LinResGT} shows $e(\bx_{LS})$ for all methods on all data instances. Generally, higher consensus led to a lower average residual, suggesting a more accurate model.

\subsection{Homography estimation}\label{sec:exp_geo}

Five image pairs from the NYC Library dataset~\cite{wilson_eccv2014_1dsfm} were used for 2D homography estimation. On each image pair, SIFT correspondences were produced by the VLFeat toolbox~\cite{vedaldi08vlfeat} and used as inputs. Fig.~\ref{fig:homoData} depicts examples of inputs, as well as consensus sets from FLRS and FLRS+IBCO. The transfer error in one image~\cite[Sec. 4.2.2]{hartley2003multiple} was used as the distance measurement. The inlier threshold $\epsilon$ was set to 4 pixels. The 4-Point algorithm~\cite[Sec. 4.7.1]{hartley2003multiple} was used in all random sampling approaches for model fitting on minimal samples. 

\begin{figure}[!htb]\centering
	\subfigure[Input correspondences ($N = 455$).]{\includegraphics[width=0.32\columnwidth]{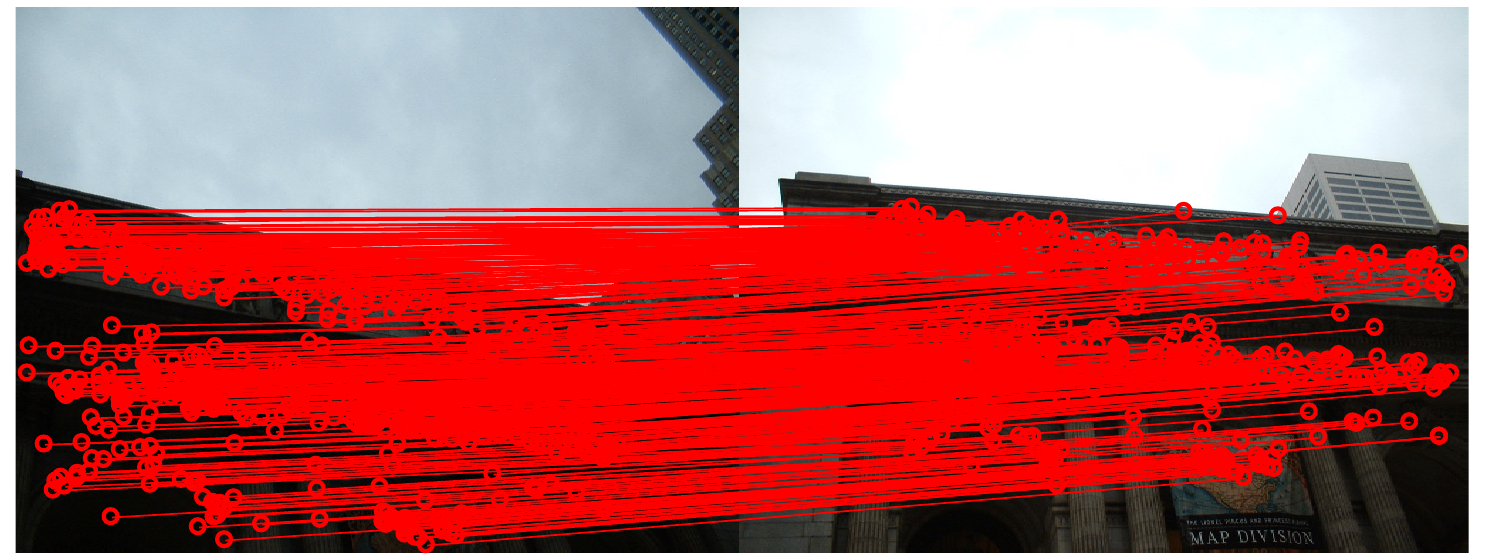}\label{subfig:Building1Corr}}
	\subfigure[FLRS consensus set (consensus: 323).]{\includegraphics[width=0.32\columnwidth]{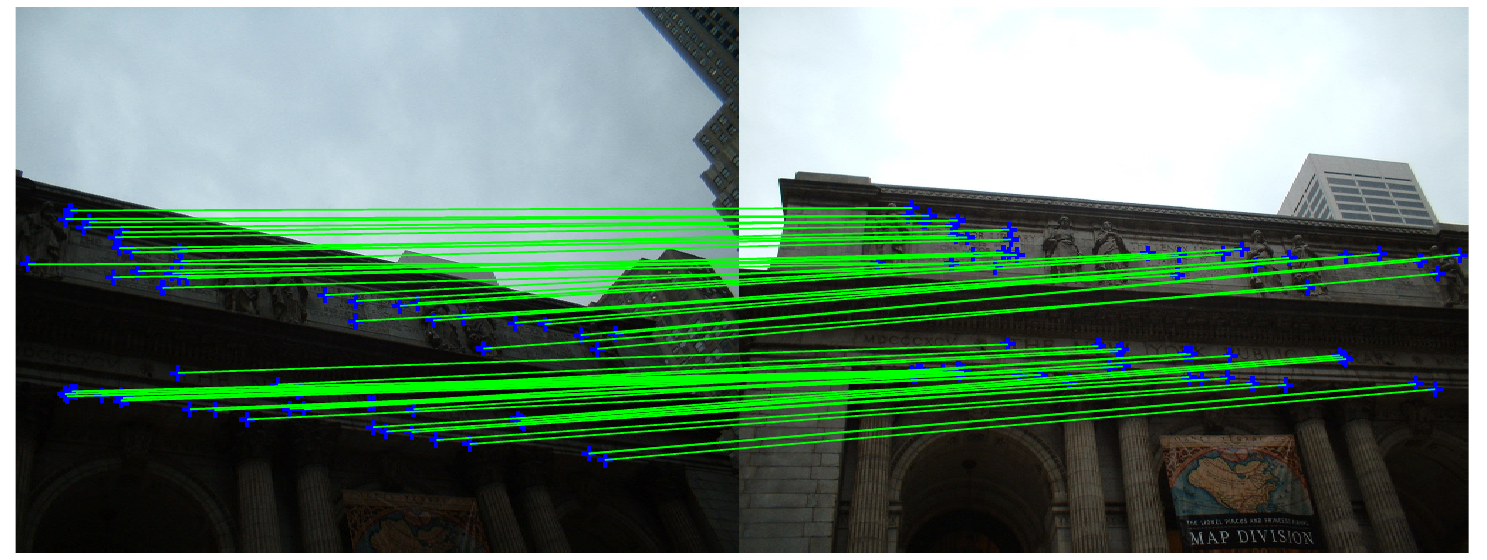}\label{subfig:Building2FLRS}}
		\subfigure[FLRS + IBCO consensus set (consensus: 353).]{\includegraphics[width=0.32\columnwidth]{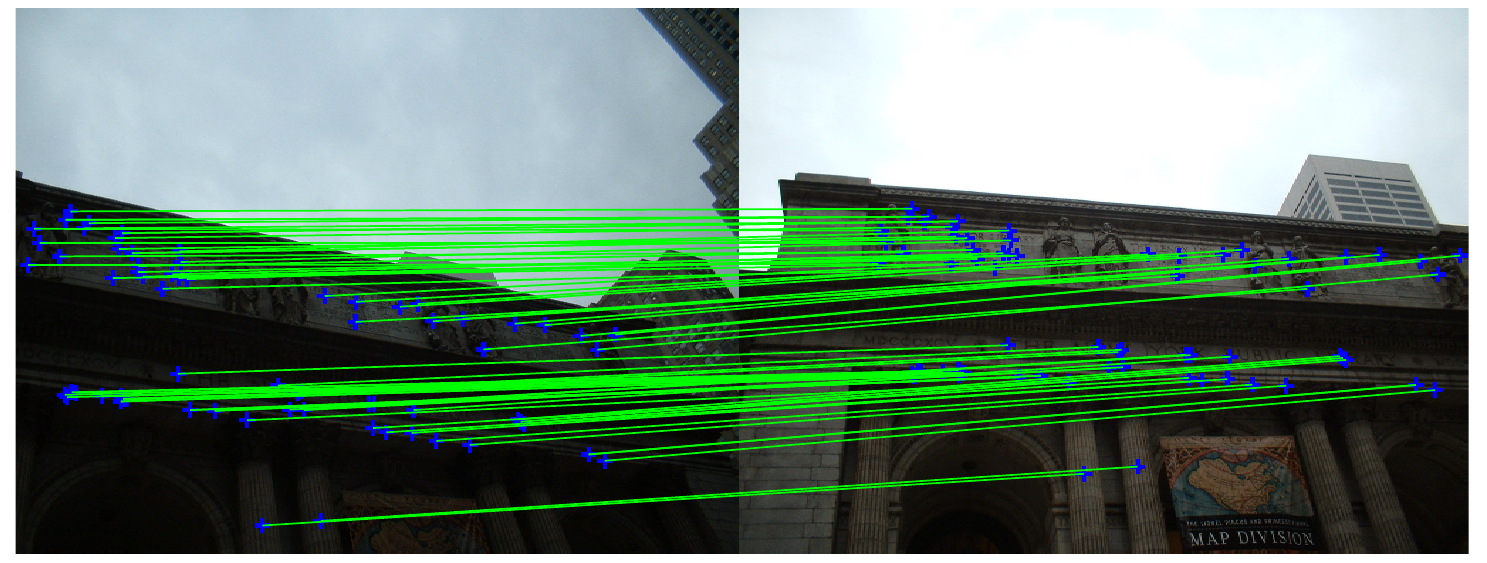}\label{subfig:Building1IBCO}}
	\subfigure[Input correspondences ($N = 346$).]{\includegraphics[width=0.32\columnwidth]{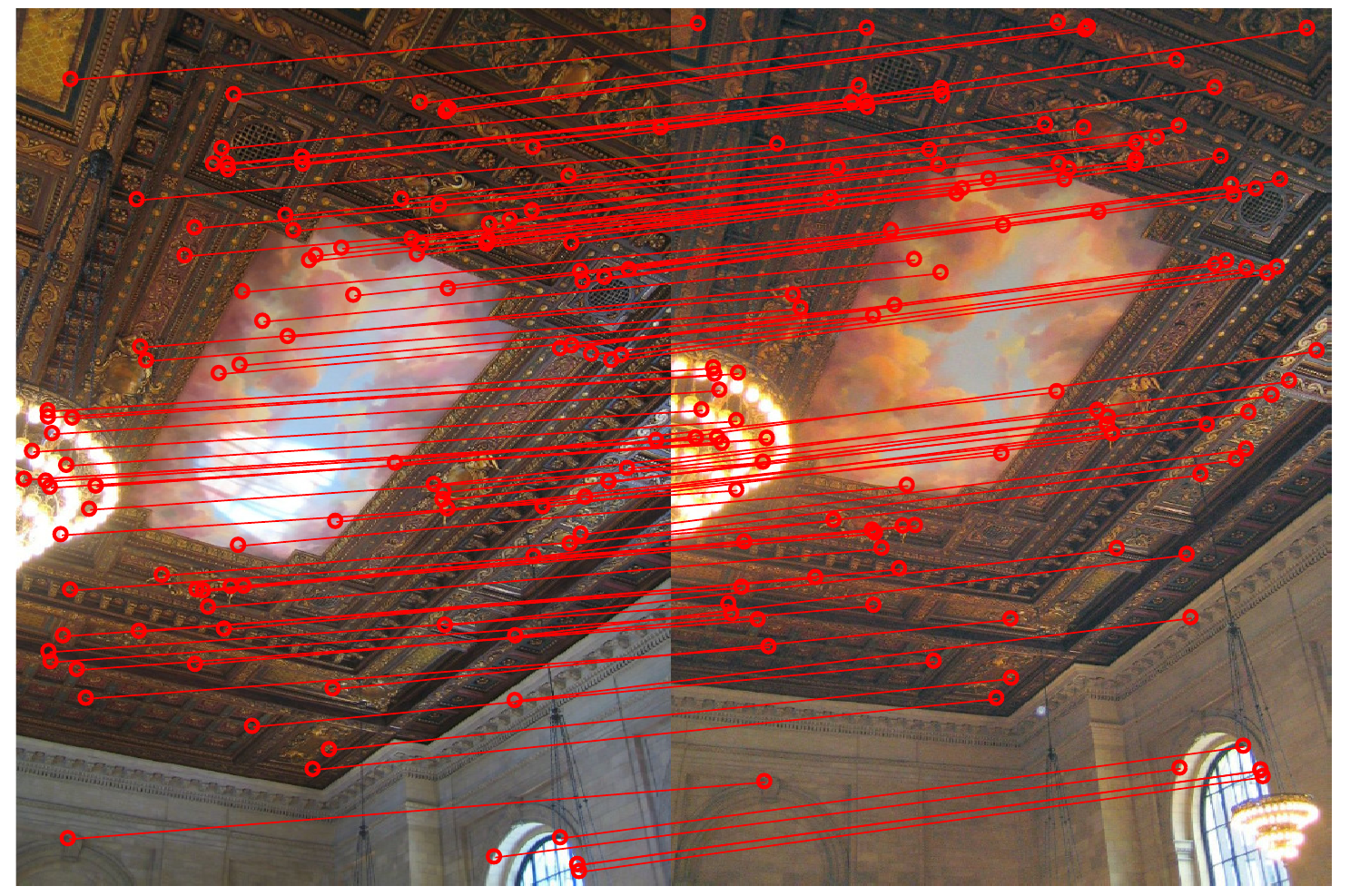}\label{subfig:Building2Corr}}
	\subfigure[FLRS consensus set (consensus: 321).]{\includegraphics[width=0.32\columnwidth]{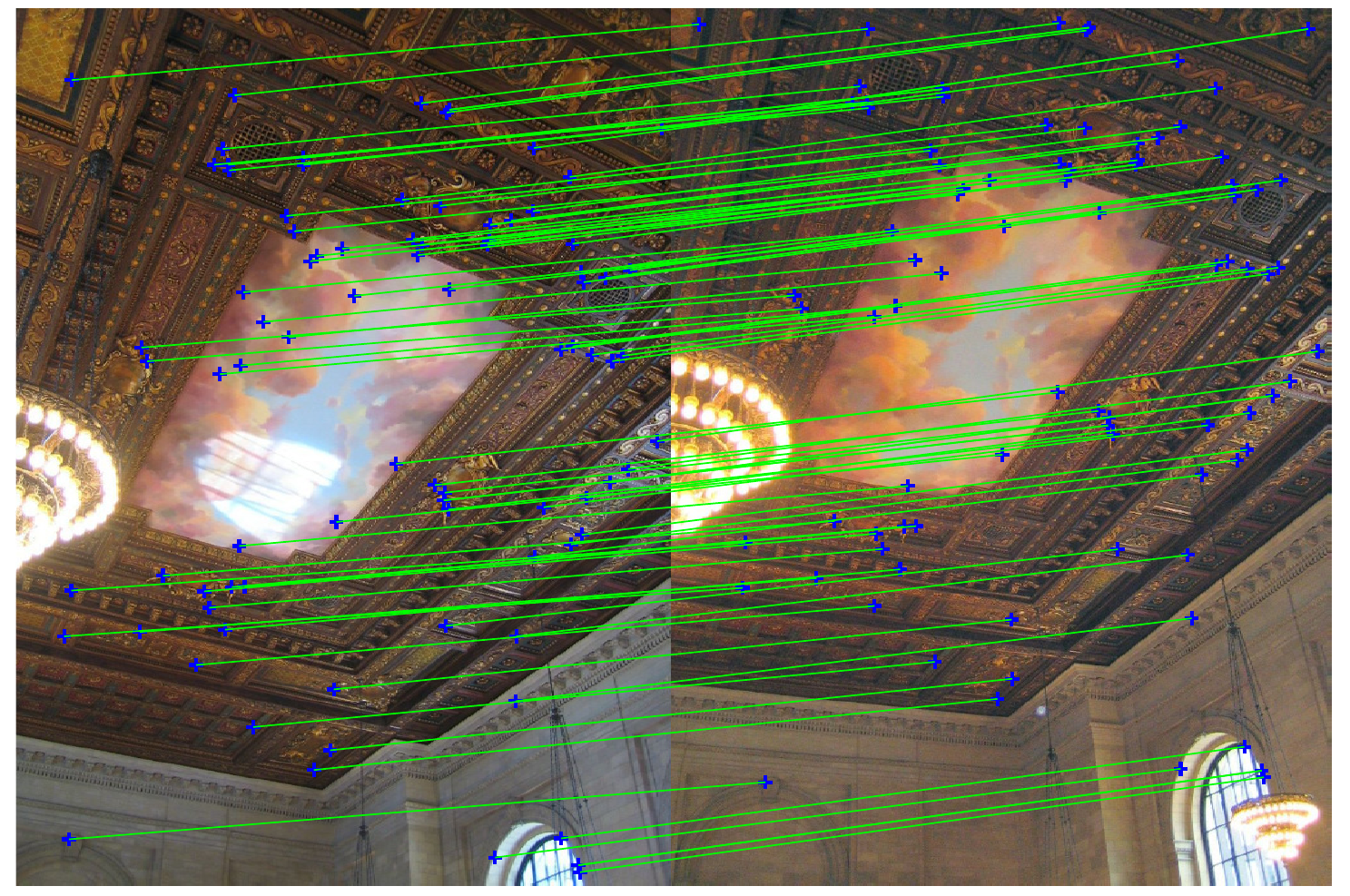}\label{subfig:Building2FLRS}}
		\subfigure[FLRS + IBCO consensus set (consensus: 331).]{\includegraphics[width=0.32\columnwidth]{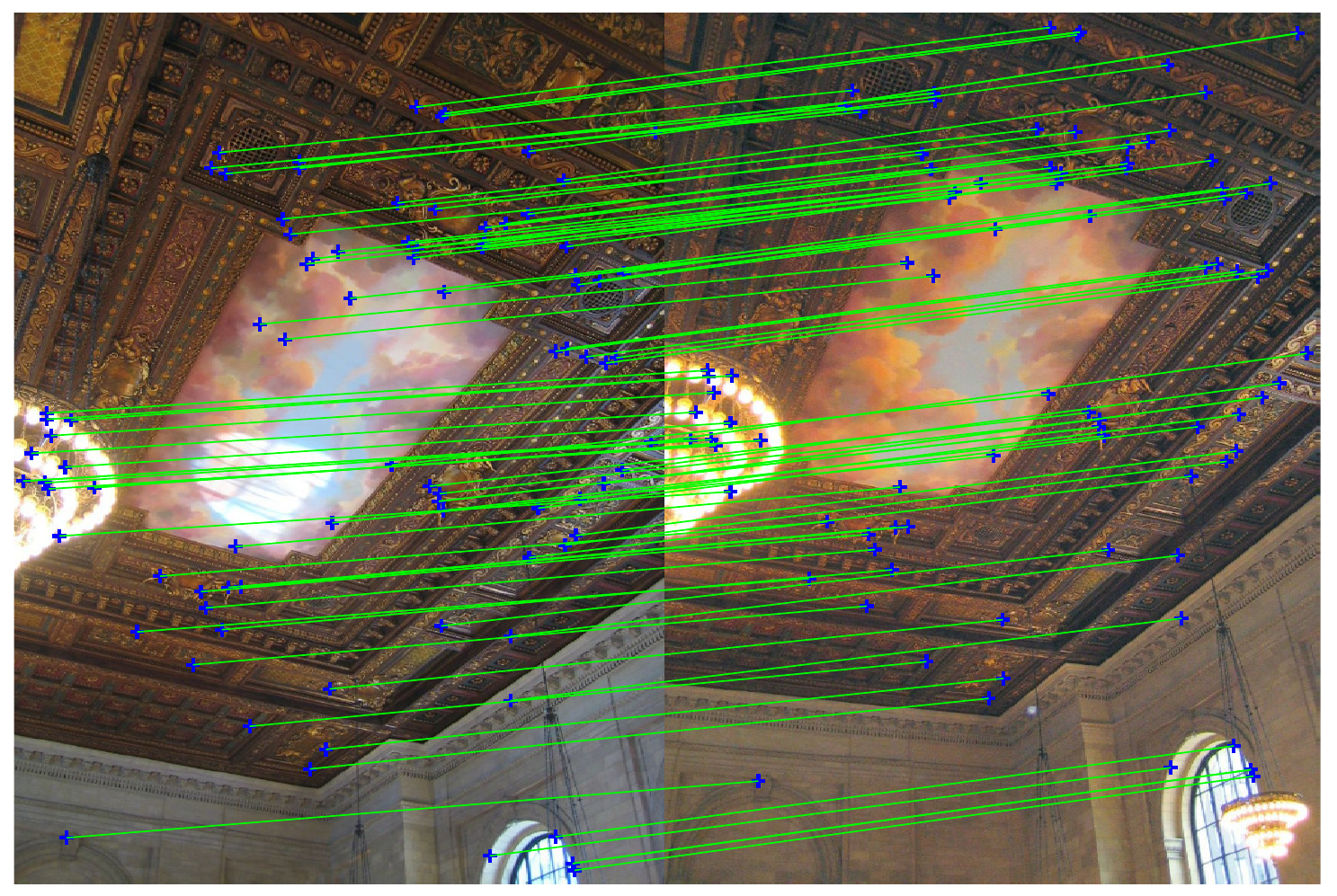}\label{subfig:Building2IBCO}}
	\caption{Data and results of robust homography estimation for \textit{Building1} (top) and \textit{Ceiling1} (bottom). Consensus sets were downsampled for visual clarity.}
	\label{fig:homoData}
\begin{minipage}{0.49\columnwidth}
	\subfigure[Average optimized consensus (as $\%$ of input size $N$). $N$ is provided in the brackets.]{\includegraphics[width=1\columnwidth]{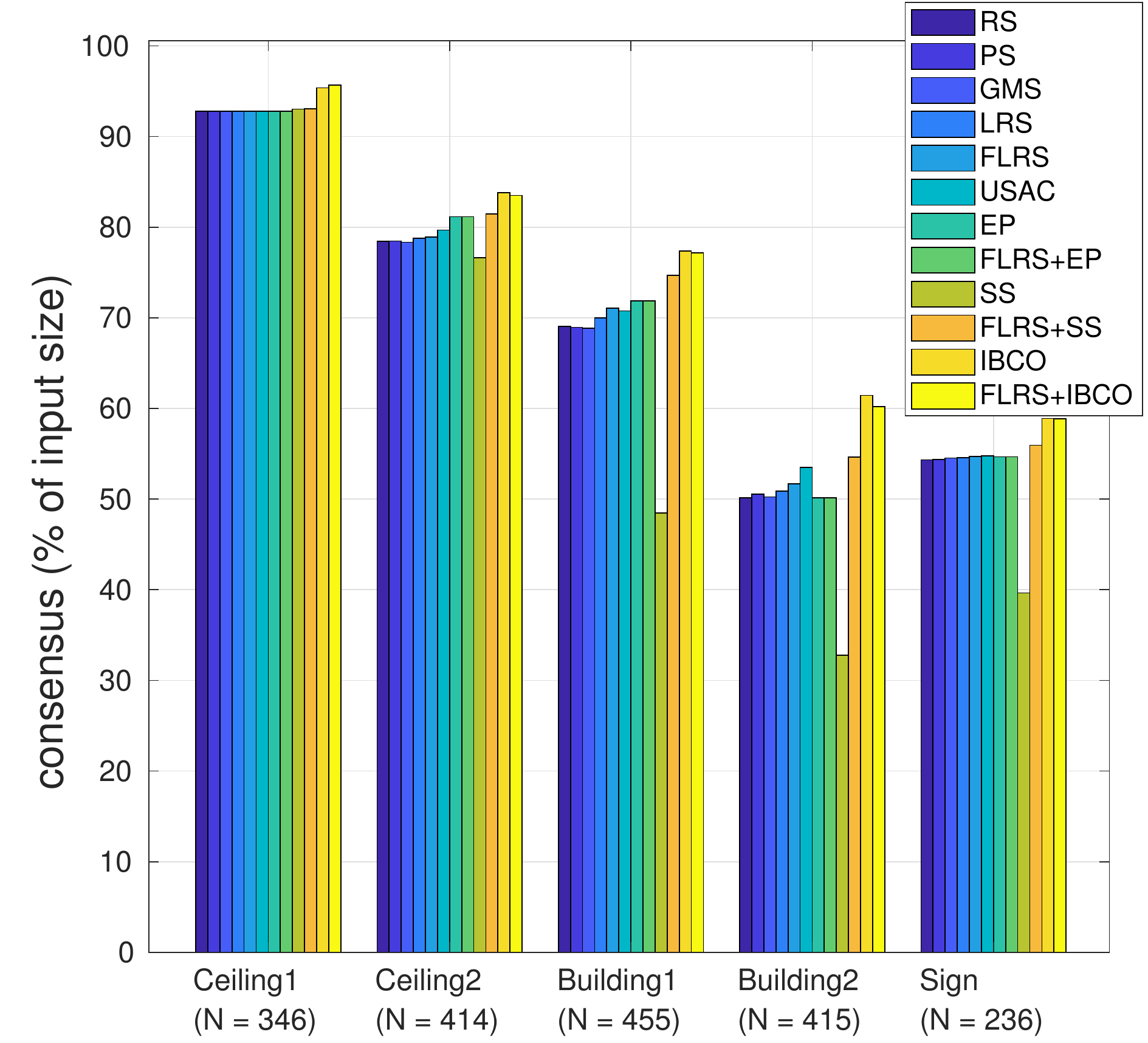}\label{subfig:HomoCon}}
\end{minipage}
\begin{minipage}{0.49\columnwidth}
	\subfigure[Standard deviation of optimized consensus over $50$ runs.]{\includegraphics[width=1\columnwidth]{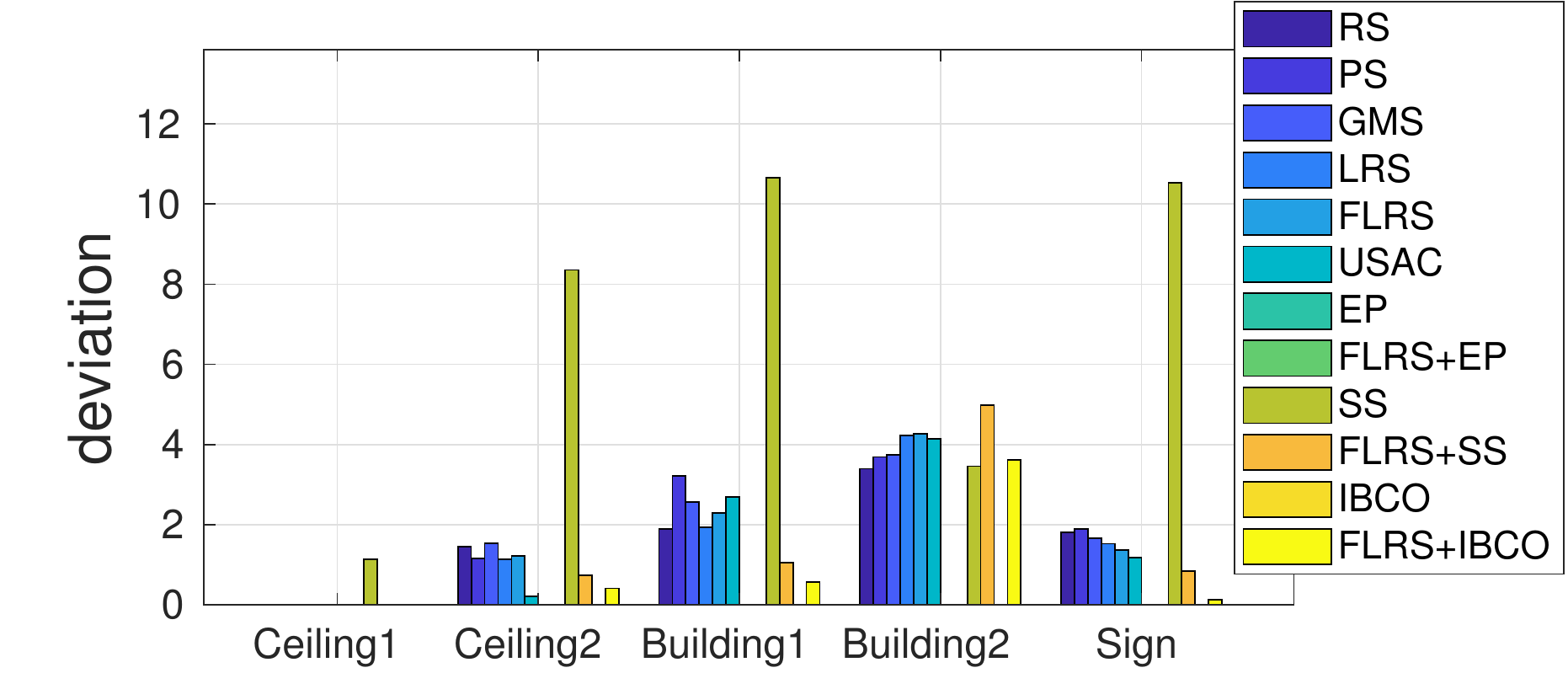}\label{subfig:HomoDev}}	
	\subfigure[Runtime in seconds.]{\includegraphics[width=1\columnwidth]{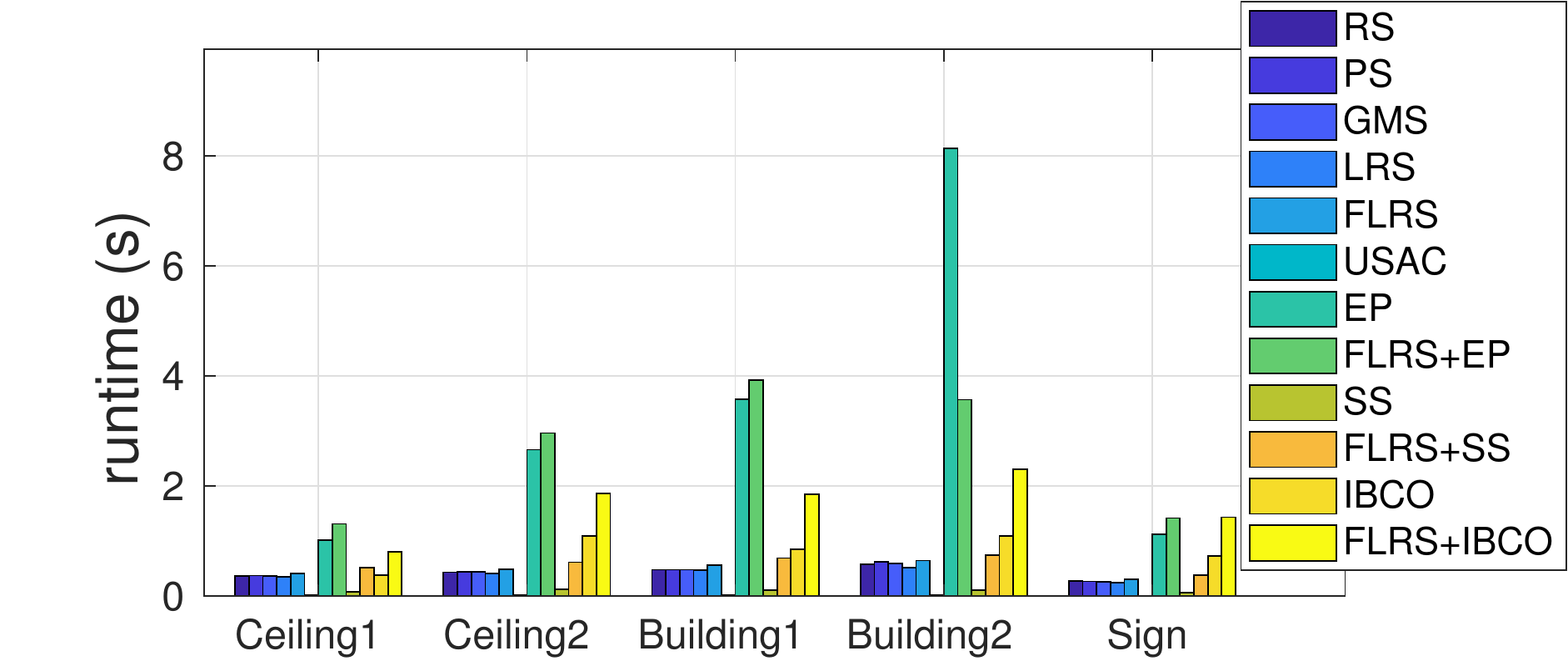}\label{subfig:HomoTime}}
\end{minipage}
	\caption{Robust homography estimation results.}
	\label{fig:HomoResult}
\end{figure}

Fig.~\ref{fig:HomoResult}, shows the quantitative results, averaged over 50 runs. Though marginally costlier than SS and random approaches, both IBCO variants found considerably larger consensus sets than other methods for all data. Meanwhile, different from the linear regression case, EP no longer had simiar result quality to IBCO. Also note that for challenging problems, e.g., \textit{Ceiling1} and \textit{Sign}, the two IBCO variants were the only methods that returned much higher consensus than RS.

\subsection{Triangulation}

Five feature tracks from the NotreDame dataset~\cite{snavely2006photo} were selected for triangulation, i.e., estimating the 3D coordinates. The input from each feature track contained a set of camera matrices and the corresponding 2D feature coordinates. The re-projection error was used as the distance measurement~\cite{ke2007quasiconvex} and the inlier threshold $\epsilon$ was set to 1 pixel. The size of minimal samples was 2 (views) for all RANSAC variants. The results are demonstrated in Fig.~\ref{fig:triResults}. For triangulation, the quality of the initial solution largely affected the performance of EP, SS and IBCO. Initialized with FLRS, IBCO managed to find much larger consensus sets than all other methods.

\subsection{Effectiveness of refinement}\label{sec:worseOff} 

Though all deterministic methods were provided with reliable initial FLRS solutions, IBCO was the only one that effectively refined all FLRS results. EP and SS sometimes even converged to worse than initial solutions. To illustrate these effects, Fig.~\ref{fig:WorseOffSol} shows the solution quality during the iterations of the three deterministic methods (initialized by FLRS) on \textit{Ceiling1} for homography estimation and \textit{Point 16} for triangulation. In contrast to EP and SS which progressively made the initial solution worse, IBCO steadily improved the initial solution.

It may be possible to rectify the behaviour of EP and SS by choosing more appropriate smoothing parameters and/or their annealing rates. However, the need for data-dependent tuning makes EP and SS less attractive than IBCO.

\subsection{Fundamental matrix estimation}\label{sec:exp_nonConv}
Image pairs from the two-view geometry corpus of CMP\footnote{\url{http://cmp.felk.cvut.cz/
data/geometry2view/}} were used for fundamental matrix estimation. As in homography estimation, SIFT correspondences were used as the input data. Since the Sampson error~\cite[Sec. 11.4.3]{hartley2003multiple} and the reprojection error~\cite[Sec. 11.4.1]{hartley2003multiple} for fundamental matrix estimation are not linear or quasiconvex, the deterministic algorithms (EP, SS, IBCO) cannot be directly applied. Thus, we linearize the epipolar constraint and use the algebraic error~\cite[Sec.~11.3]{hartley2003multiple} as the residual. The inlier threshold $\epsilon$ was set to $0.006$ for all data.

Further, a valid fundamental matrix satisfies the rank-2 constraint~\cite[Sec. 11.1.1]{hartley2003multiple}, which is non-convex. For EP, SS, IBCO, we impose the rank-2 constraint using SVD after each parameter vector updates (for IBCO, after each BCO run).

\begin{figure}[t]\centering
\begin{minipage}{0.49\columnwidth}
	\subfigure[Average optimized consensus (as $\%$ of input size $N$). $N$ is provided in the brackets.]{\includegraphics[width=1\columnwidth]{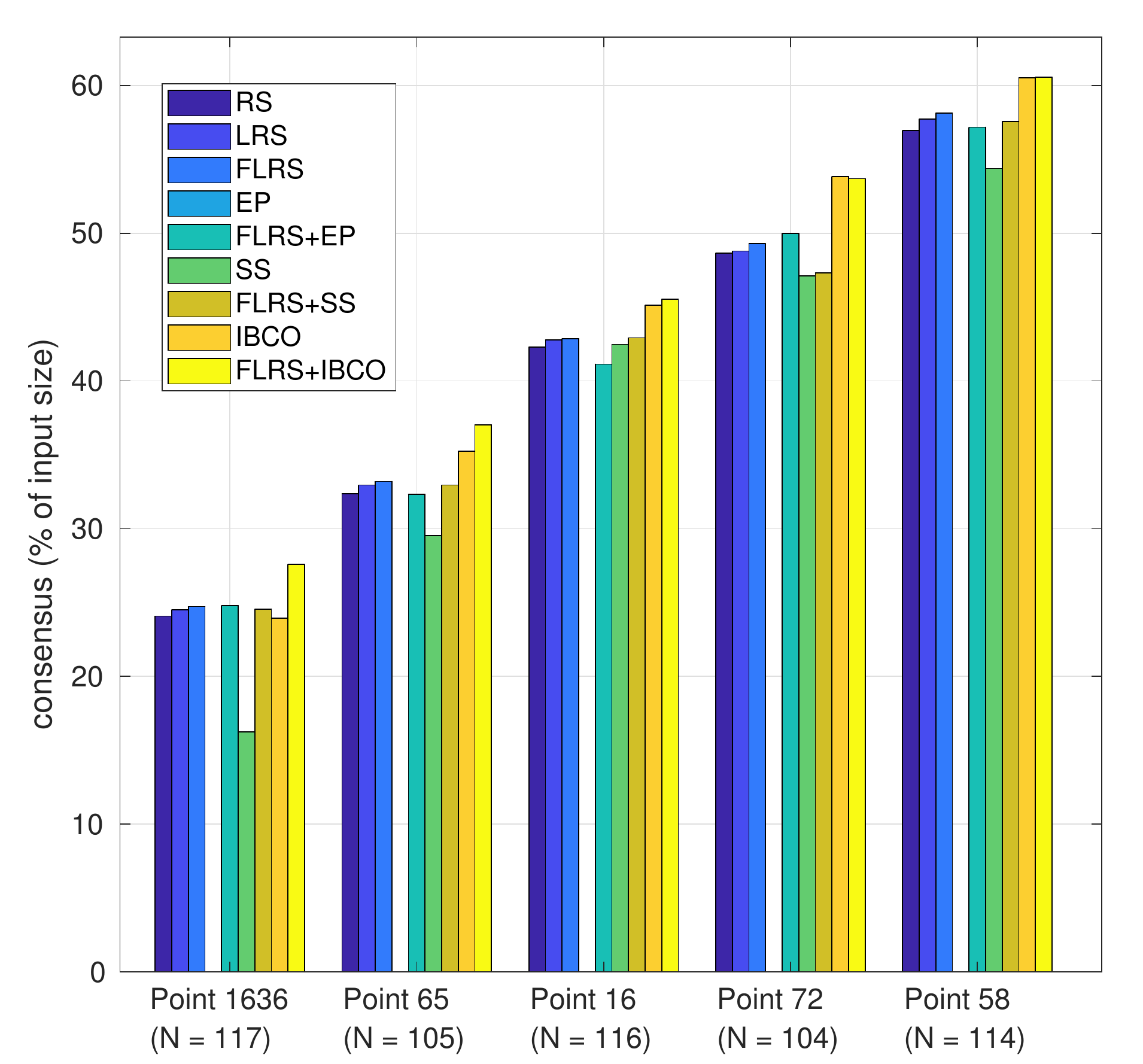}\label{subfig:triCon}}
\end{minipage}
\begin{minipage}{0.49\columnwidth}
	\subfigure[Standard deviation of optimized consensus over $50$ runs.]{\includegraphics[width=1\columnwidth]{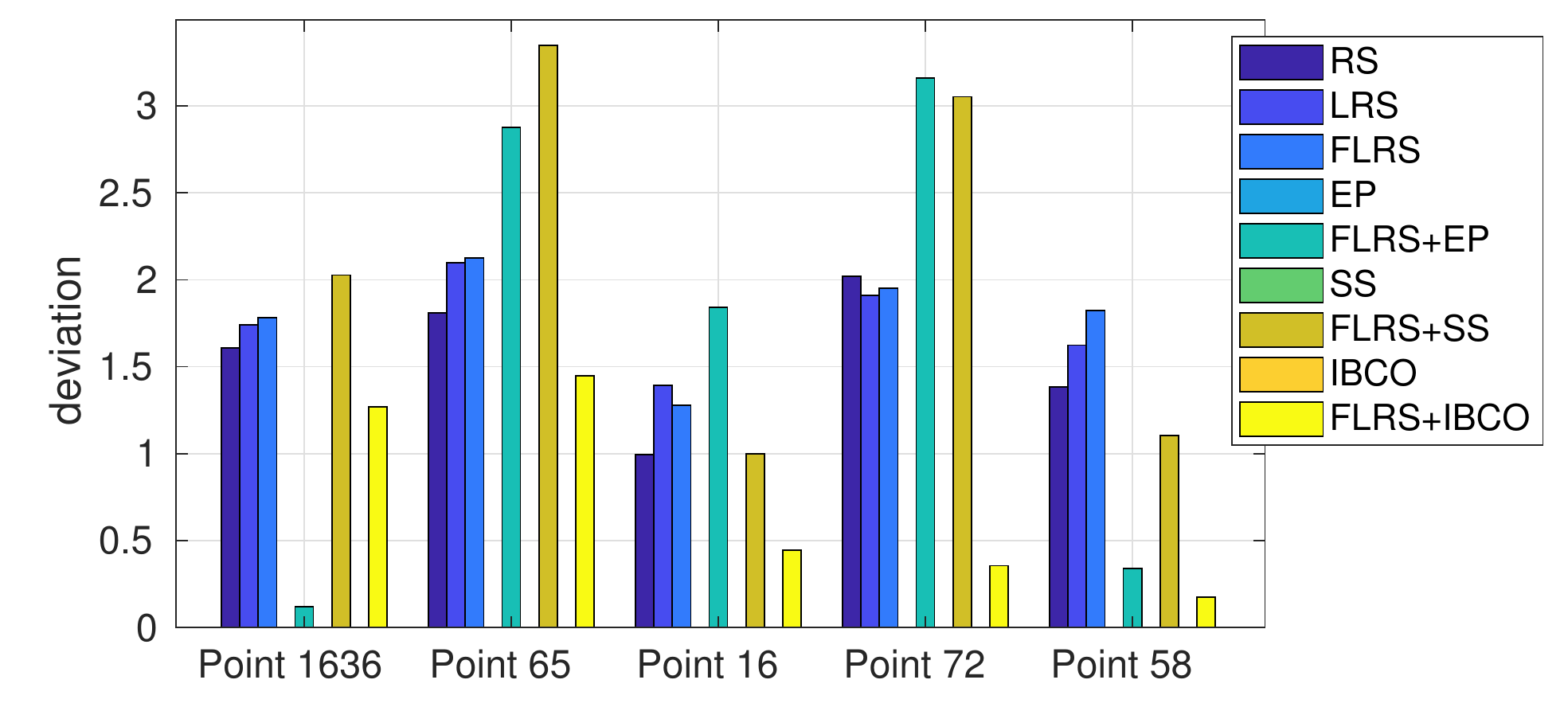}\label{subfig:triDev}}	
	\subfigure[Runtime in seconds.]{\includegraphics[width=1\columnwidth]{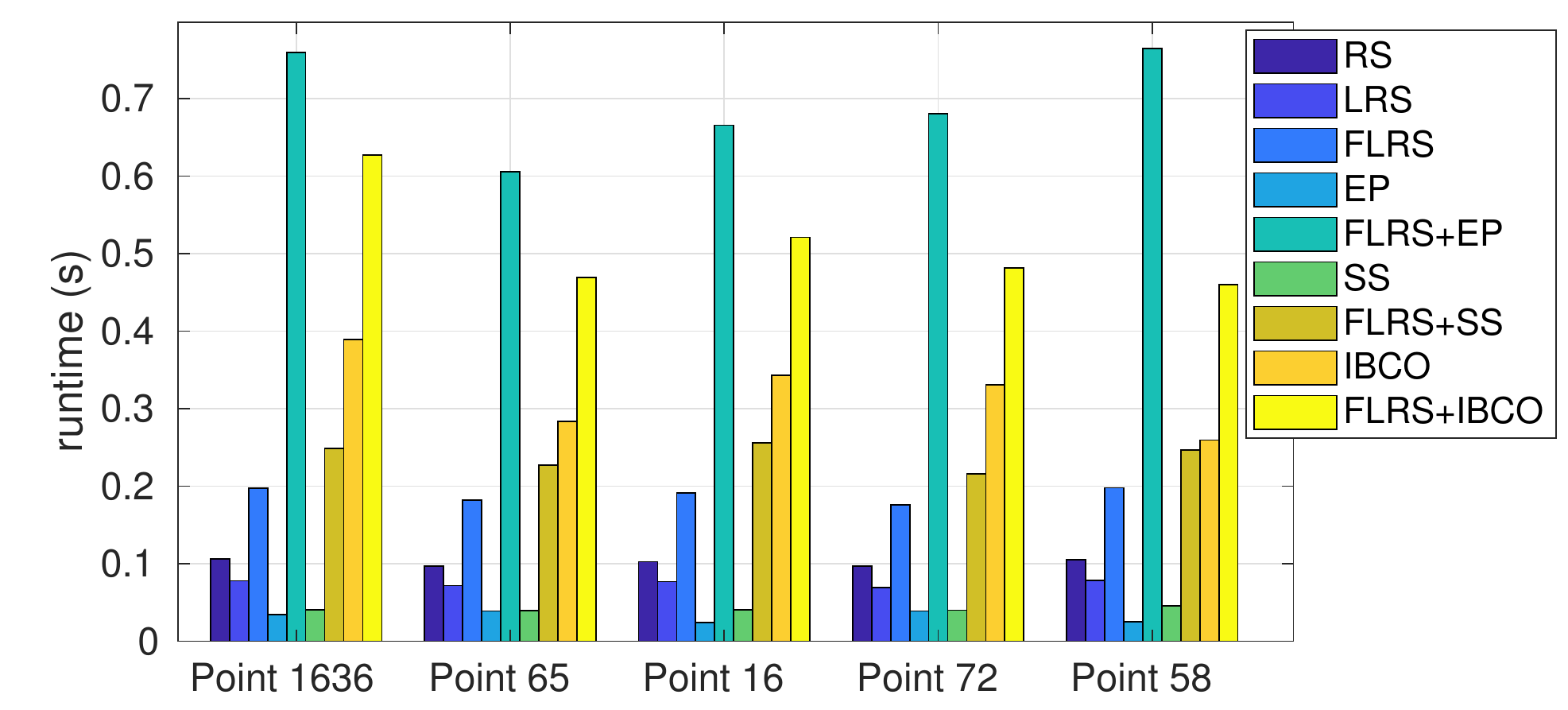}\label{subfig:triTime}}
\end{minipage}
	\caption{Robust triangulation results.}
	\label{fig:triResults}
\subfigure[\textit{Ceiling1} in homography estimation.]{\includegraphics[width=0.49\columnwidth]{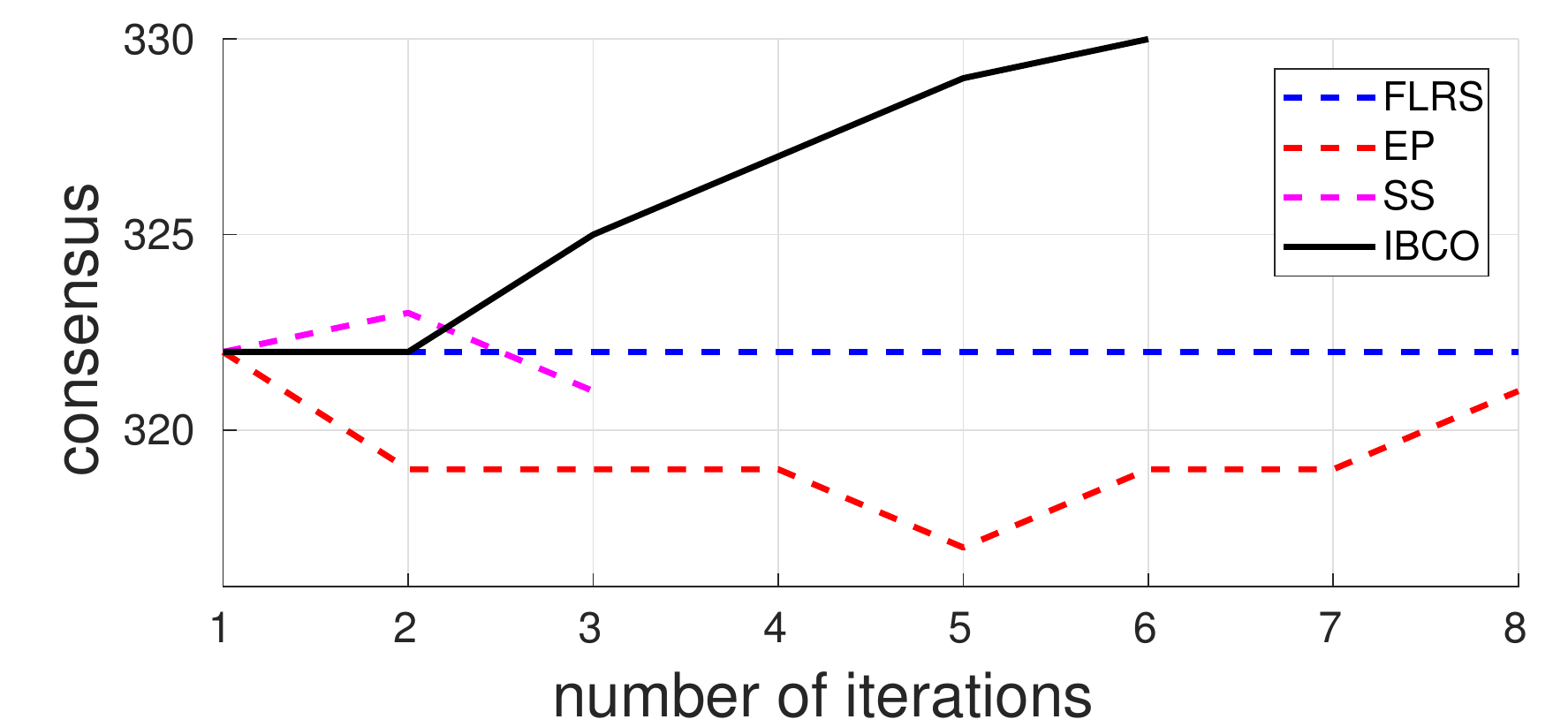}}
\subfigure[\textit{Point 16} in triangulation.]{\includegraphics[width=0.49\columnwidth]{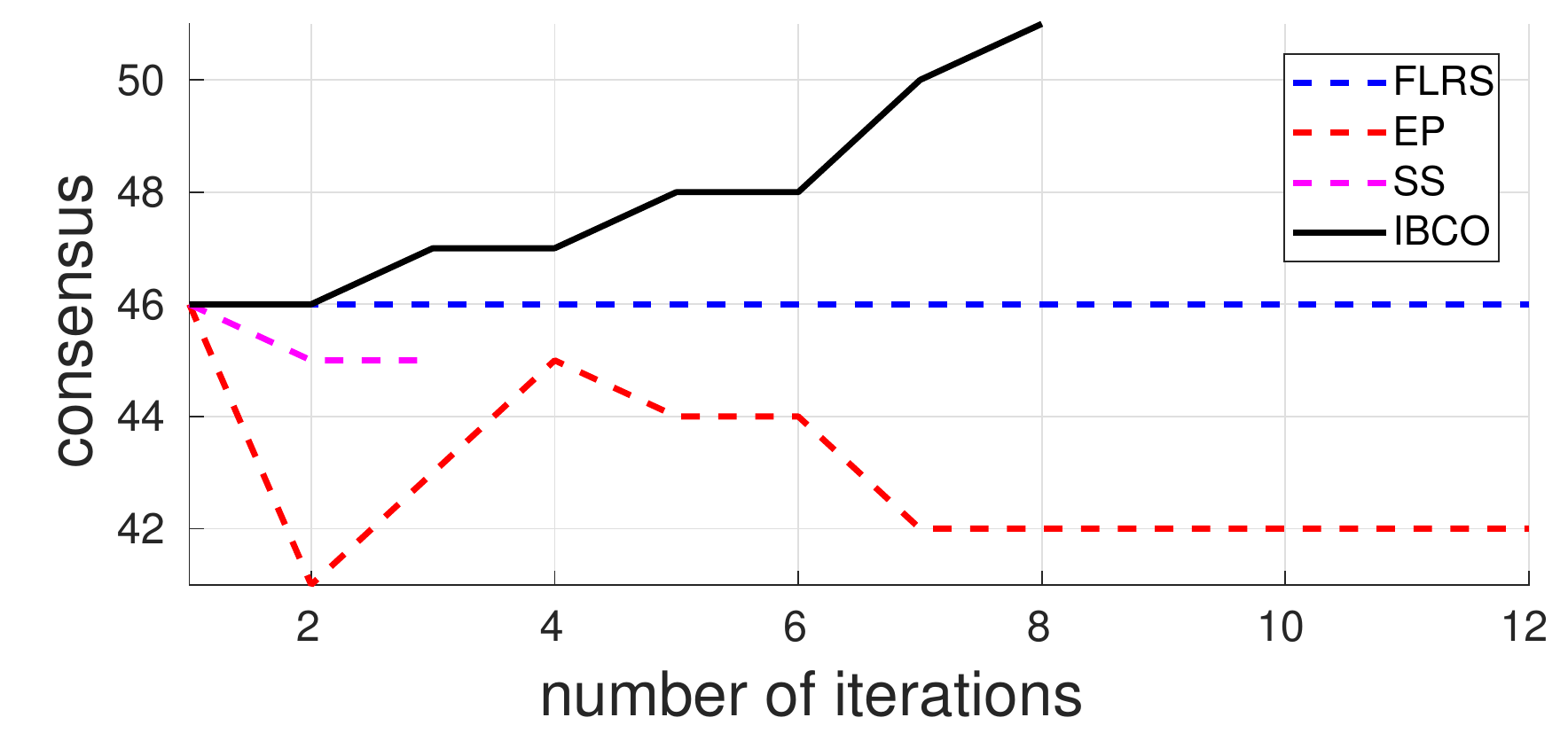}}
\caption{Consensus size in each iteration, given FLRS results as the initialization. Observe that EP and SS converged to worse off solutions.}\label{fig:WorseOffSol}
\end{figure}

Fig. \ref{fig:funData} depicts sample image pairs and generated SIFT correspondences, as well as consensus sets from FLRS and FLRS+IBCO. The seven-point method~\cite[Sec.~11.1.2]{hartley2003multiple} was used in USAC and the normalized 8-point algorithm~\cite[Sec.~11.2]{hartley2003multiple} was used in all other RANSAC variants. 

As shown in Fig.~\ref{subfig:FunCon}, unlike EP and SS who failed to refine the initial FLRS results for all the tested data, IBCO was still effective even though the problem contains non-convex constraints.

\begin{figure}[!htb]\centering
	\subfigure[Input correspondences ($N = 186$).]{\includegraphics[width=0.32\columnwidth]{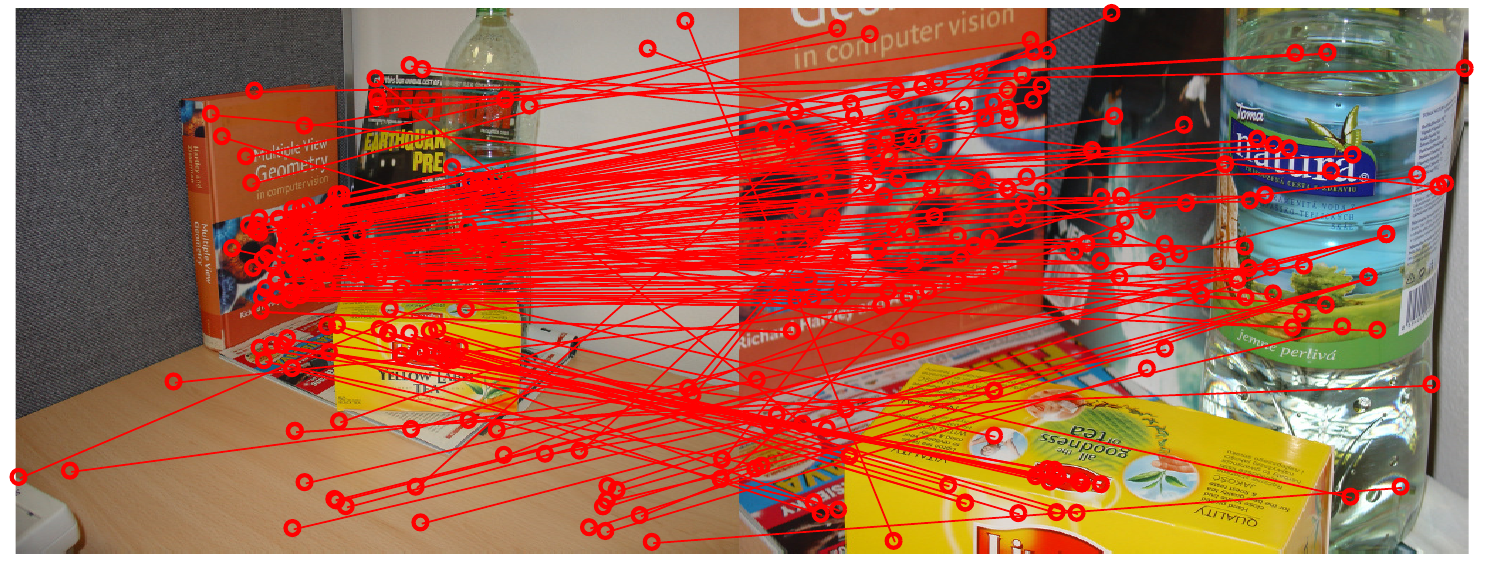}\label{subfig:zoomCorr}}
		\subfigure[FLRS consensus set (consensus: 85).]{\includegraphics[width=0.32\columnwidth]{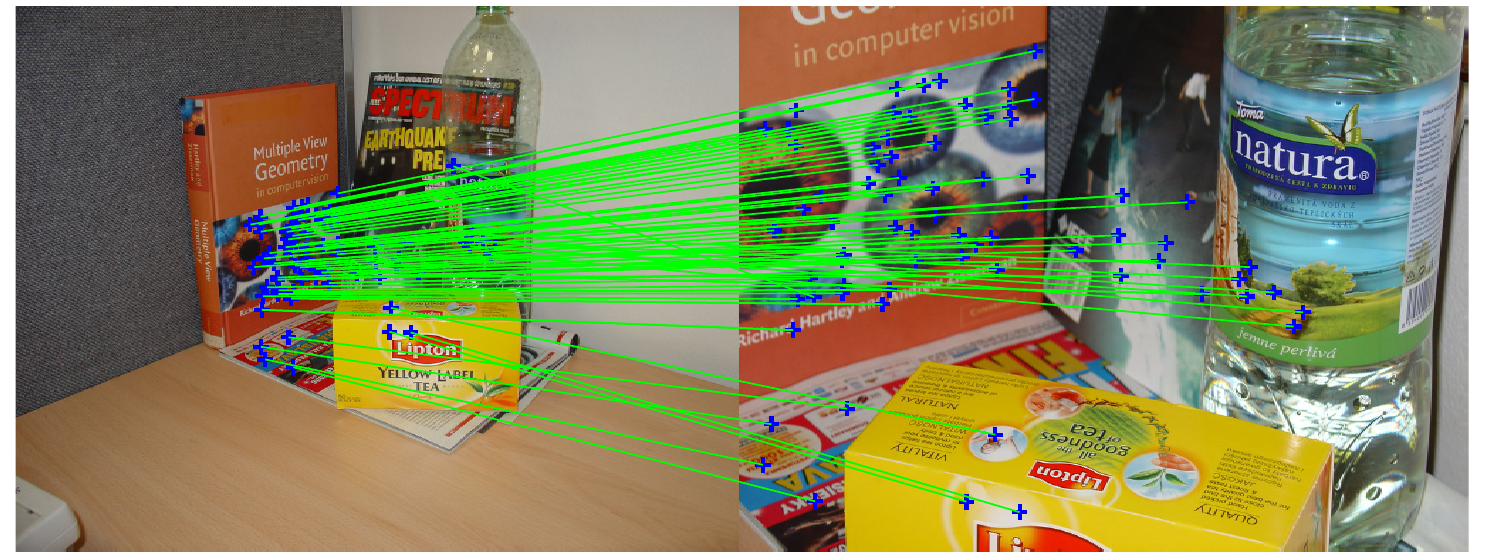}\label{subfig:zoomFLRS}}
		\subfigure[FLRS + IBCO consensus set (consensus: 97).]{\includegraphics[width=0.32\columnwidth]{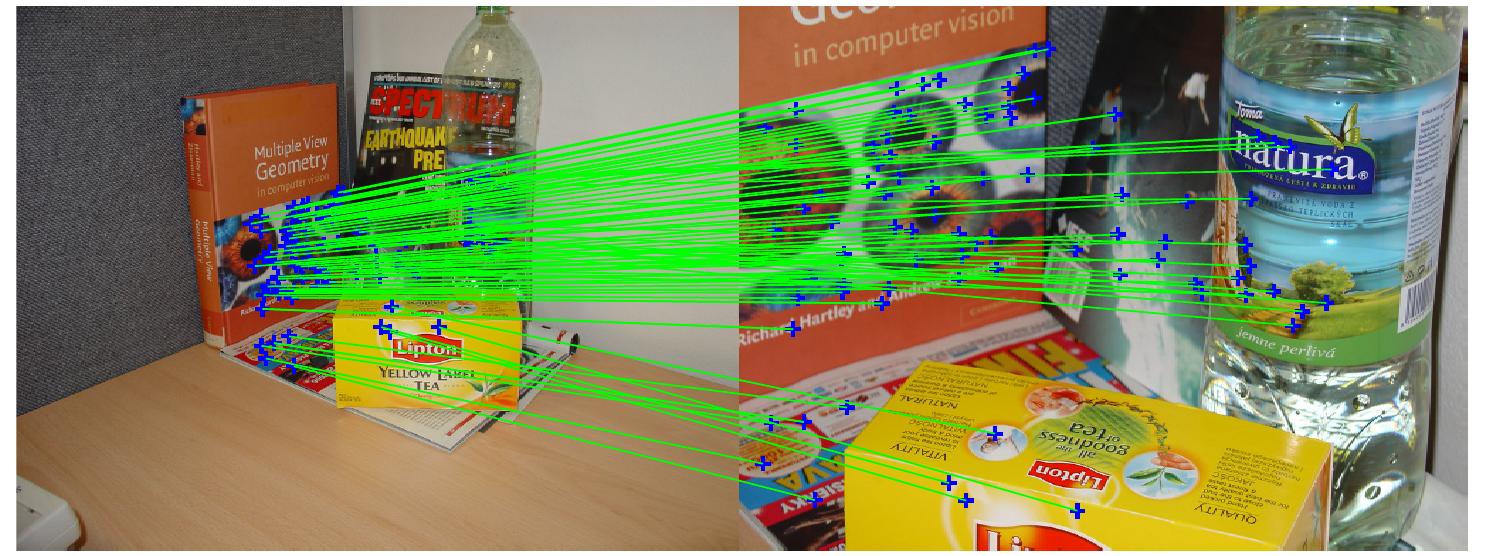}\label{subfig:zoomIBCO}}
		\subfigure[Input correspondences ($N = 101$).]{\includegraphics[width=0.32\columnwidth]{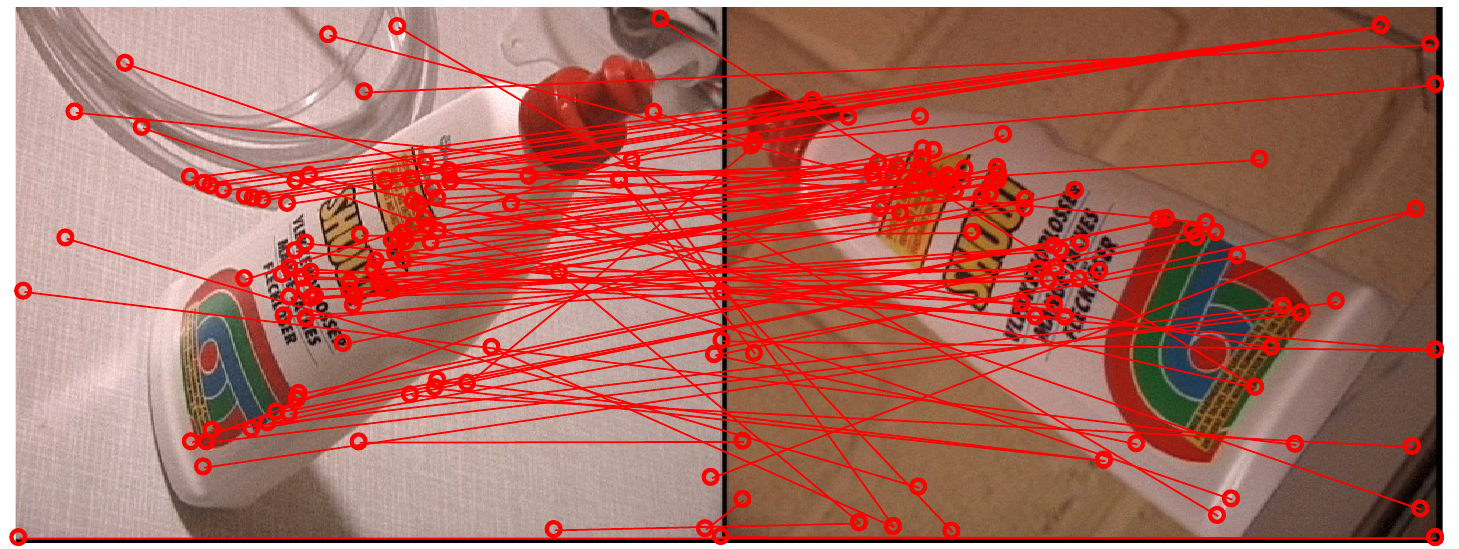}\label{subfig:shoutCorr}}
		\subfigure[FLRS consensus set (consensus: 32).]{\includegraphics[width=0.32\columnwidth]{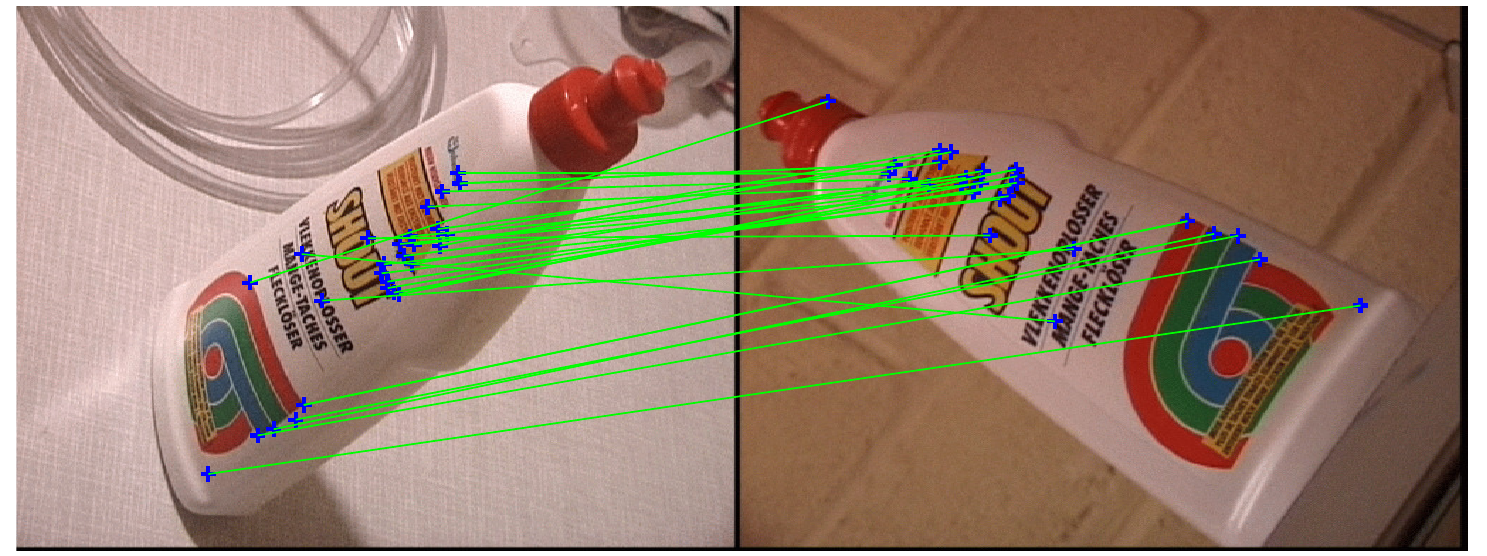}\label{subfig:shoutFLRS}}
		\subfigure[FLRS + IBCO consensus set (consensus: 36).]{\includegraphics[width=0.32\columnwidth]{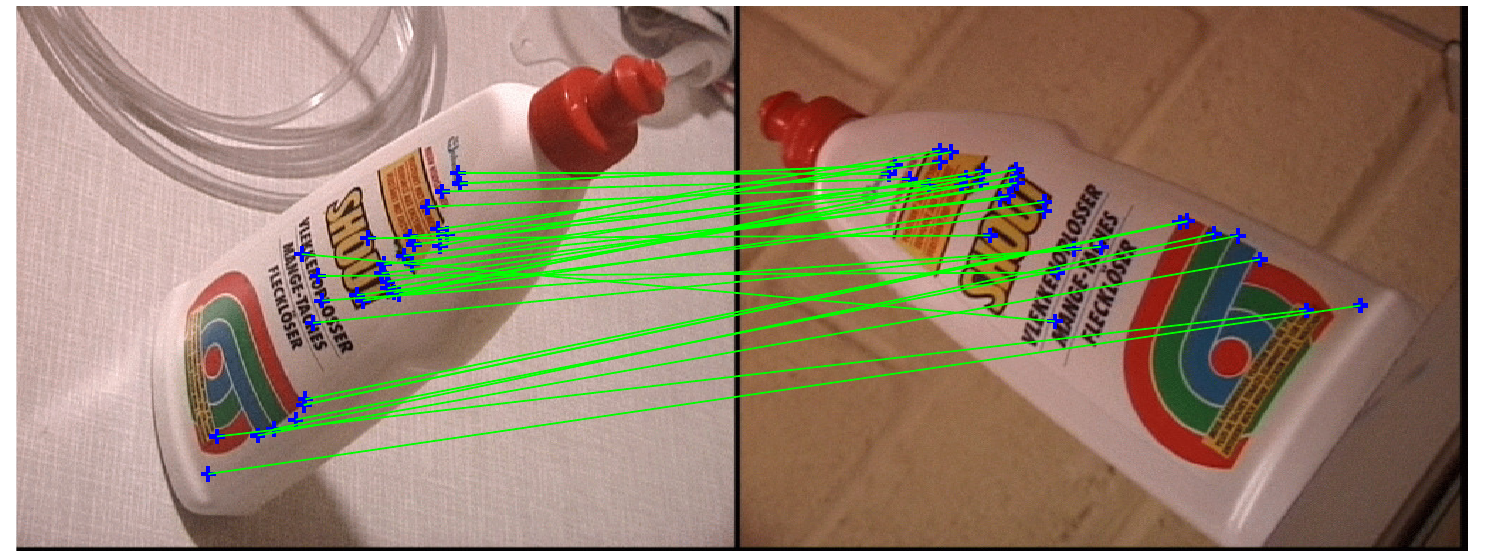}\label{subfig:shoutIBCO}}
	\caption{Data and results of fundamental matrix estimation for \textit{zoom} (top) and \textit{shout} (bottom).}
	\label{fig:funData}
\begin{minipage}{0.47\columnwidth}
	\subfigure[Average optimized consensus (as $\%$ of input size $N$). $N$ is provided in the brackets.]{\includegraphics[width=1\columnwidth]{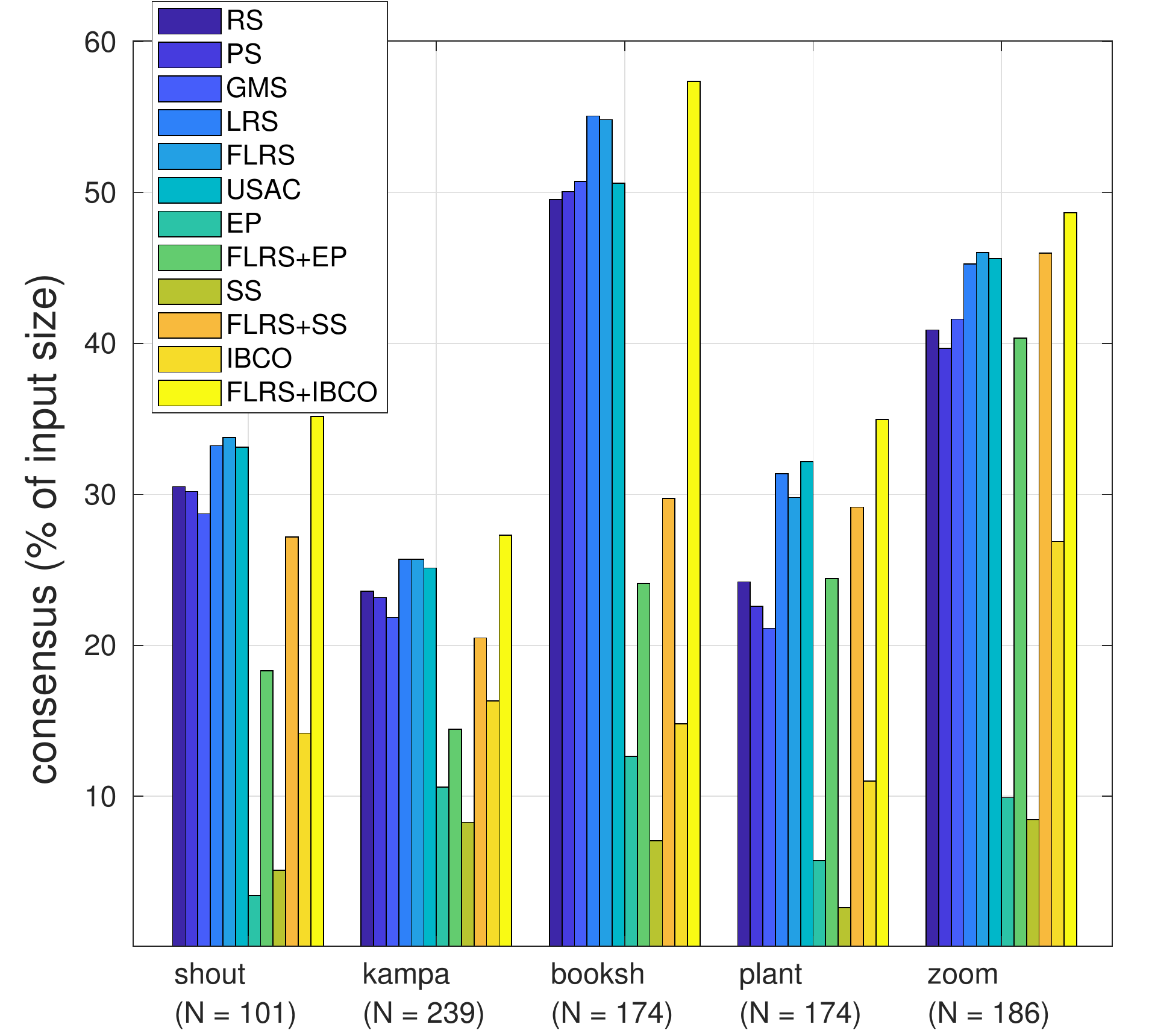}\label{subfig:FunCon}}
\end{minipage}
\begin{minipage}{0.47\columnwidth}
	\subfigure[Standard deviation of optimized consensus over $50$ runs.]{\includegraphics[width=1\columnwidth]{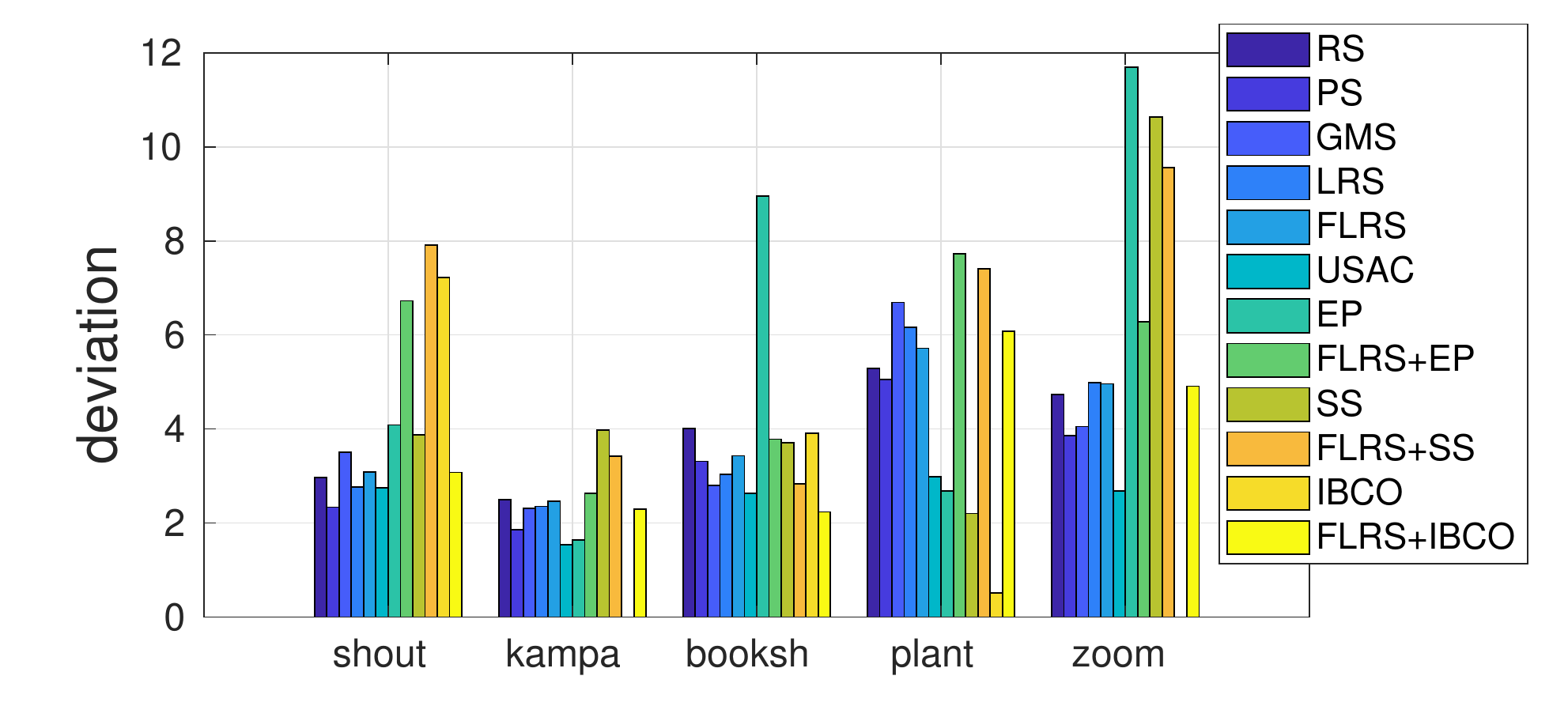}\label{subfig:FunDev}}	
	\subfigure[Runtime in seconds.]{\includegraphics[width=1\columnwidth]{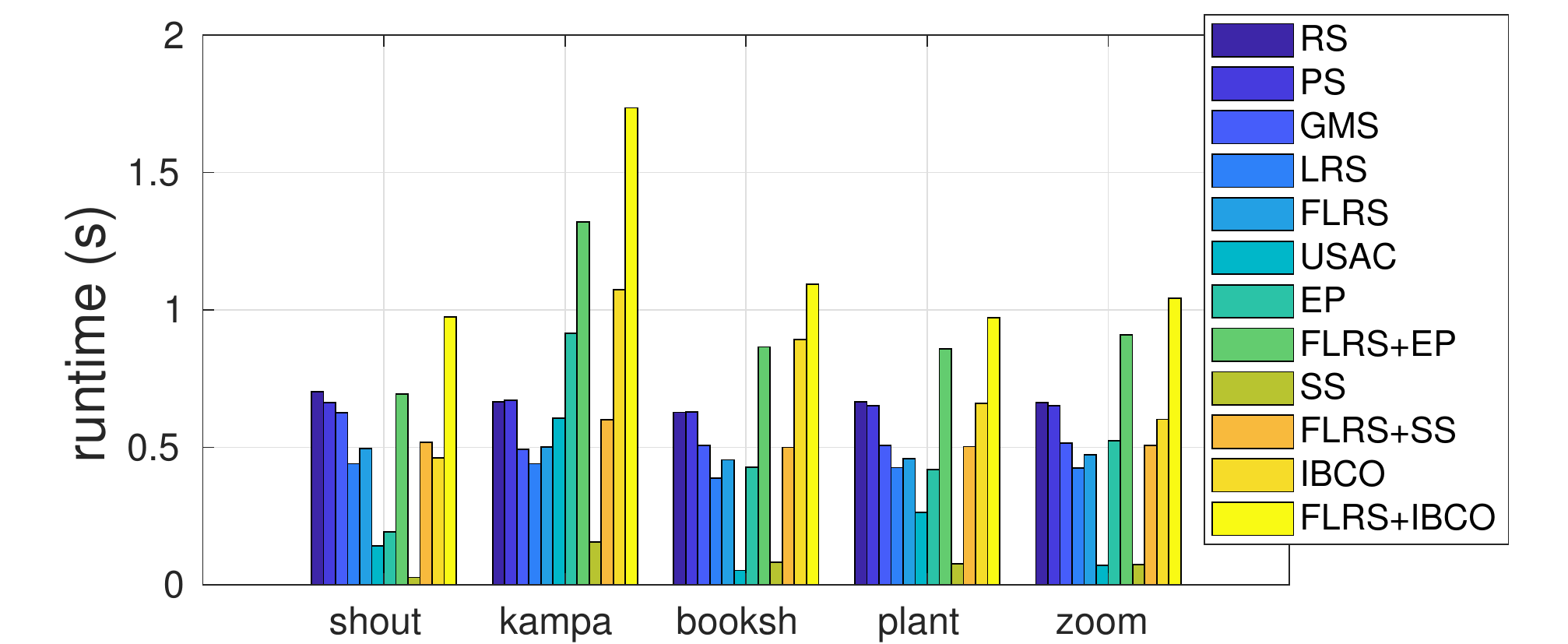}\label{subfig:FunTime}}
\end{minipage}
	\caption{Robust fundamental matrix estimation results.}
	\label{fig:FunResults}
\end{figure}

\section{Conclusions}
We proposed a novel deterministic algorithm for consensus maximization with non-linear residuals. The basis of our method lies in reformulating the decision version of consensus maximization into an instance of biconvex programming, which enables the use of bisection for efficient guided search. Compared to other deterministic methods, our method does not relax the objective of consensus maximization problem and is free from the tuning of smoothing parameters, which makes it much more effective at refining the initial solution. Experiments show that our method is able to greatly improve upon initial results from widely used random sampling heuristics.

\noindent{\textbf{Acknowledgements} This work was supported by the ARC grant DP160103490.}

\clearpage

\bibliographystyle{splncs04}
\bibliography{0685.bib}
\end{document}